\DeclareMathOperator*{\argmax}{arg\,max}
\newtheorem{theorem}{Theorem}
\newtheorem{definition}{Definition}
\newtheorem*{problem*}{Problem}
\newtheorem{fact}[theorem]{Fact}
\title{Multi-Armed Bandits with Local Differential Privacy}
\author{%
	Wenbo Ren\\
	Dept. Computer Science \& Engineering\\
	The Ohio State University\\
	\texttt{ren.453@osu.edu} 
	\And
	Xingyu Zhou\\
	Dept. Electrical \& Computer Engineering\\
	The Ohio State University\\
	\texttt{zhou.2055@osu.edu}
	\And
	Jia Liu \\
	Dept. Computer Science\\
	Iowa State University\\
	\texttt{jialiu@iastate.edu} 
	\And
	Ness B. Shroff \\
	Dept. ECE and CSE\\
	The Ohio State University\\
	\texttt{shroff.11@osu.edu} 
}
\begin{document}

\maketitle

\begin{abstract}
	This paper investigates the problem of regret minimization for multi-armed bandit (MAB) problems with local differential privacy (LDP) guarantee. In stochastic bandit systems, the rewards may refer to the users' activities, which may involve private information and the users may not want the agent to know. However, in many cases, the agent needs to know these activities to provide better services such as recommendations and news feeds. To handle this dilemma, we adopt differential privacy and study the regret upper and lower bounds for MAB algorithms with a given LDP guarantee. In this paper, we prove a lower bound and propose algorithms whose regret upper bounds match the lower bound up to constant factors. Numerical experiments also confirm our conclusions.
\end{abstract}

\section{Introduction}
\subsection{Background and motivation}
	%MAB
	\makeatletter
	\def\blfootnote{\gdef\@thefnmark{}\@footnotetext}
	\makeatother
	The %\blfootnote{This paper was submitted to a conference for review on Jun 05 2020.} 
	multi-armed bandit (MAB) \cite{EarlyBandit1985} problem provides a classic model for abstracting sequential decision making under uncertainty, and has attracted a wide range of interest in various areas, such as communication networks, online advertising, clinical trials, product testing, etc. In an MAB model, there is a set of \textit{arms}, and each\textit{ pull} of an arm generates a random \textit{reward} according to some \textit{unknown latent distribution} of this arm. The \textit{agent} adaptively chooses arms to pull according to past observations in order to achieve some goal. A widely studied goal is \textit{regret minimization}, where the regret is the expected gap between a proposed algorithm and an optimal algorithm that knows the latent distributions. To minimize the regret, the agent needs to balance the trade-off between \textit{exploration} and \textit{exploitation}, where exploration refers to learning the environment and exploitation refers to pulling the best arm according to the current knowledge. 
	
	%Privacy concern
	In recent years, users have become increasingly concerned about protecting their private online information and activities, which may include their personal profiles, browsing histories, and activities on the Internet. They may not want to share this information with other parties. However, many real-world systems like medical experiments, recommender systems, advertisement allocators, online shopping websites, and search engines need such data to learn critical matters and provide better services. To handle this dilemma, there is a compelling need to develop algorithms that can optimally trade off system performance and the privacy level provided to the users. 
	
	%DP
	A widely accepted and applied metric to measure the privacy level is the \textit{differential privacy} (DP) \cite{DPFoundation2014}, which, in theory, guarantees that it is difficult for any party or eavesdropper to determine whether or not an individual is listed in a private database. DP algorithms have been studied in many areas, such as data release \cite{DPDataRelease2011}, optimization \cite{DPOptimization2015}, and Q-learning \cite{DPQLearning2019}, just to name a few. However, DP remains under-explored in the MAB settings.
	
	%Example ofDP
	Here, we take clinical trials as a concrete example to illustrate the use of DP in MAB. In an experiment of an illness with multiple treatments (aka arms), the experimenter (aka agent) wants to sequentially choose treatments for patients (aka individual users) based on past observations on treatment effects. This problem can be viewed as an MAB regret minimization problem. However, the patients may not be willing to share the actual effects of the treatments with the experimenter due to privacy concerns. By the DP bandit algorithms, the actual effects will not be known by the experimenter, which provides a certain level of privacy guarantee to all patients, while also enabling the experimenter to learn from the observations efficiently.
	
	%LDP
	The above example fits the \textit{local differentially private} (LDP) bandit model \cite{DPBanditLowerBound2019,DPBanditsCorrupt2018}. Different from the DP bandit model, in the LDP setting there is no trusted centroid curator \cite{DPFoundation2014}. In this paper, we assume that each user has its own curator (or privacy mechanism) that can do randomized mapping on its data to provide privacy guarantee. This curator can be softwares or plugins embedded in the user's devices or terminals, and the non-private data will not leave the control of the user unless they are processed by the user's curator. 
	
	%Further clarification of LDP
	Another example of LDP MAB is shopping websites, which also indicates the necessity of the LDP setting instead of the DP setting: The server wants to sequentially choose products to recommend according to the users' past purchase histories, while some users are not willing to share this information as the purchase histories may reveal private information (e.g., a person who buys a lot of heart-disease medicines is more likely to have related illness). In this scenario, it is unlikely that there is a third-party centroid curator that can gain access to all the purchase data, sine these data are commonly viewed as a valuable property for the company's business success. In the literature, the DP bandit problems have been studied in different settings \cite{DPBanditLowerBound2019,DPBanditsCorrupt2018,DPBanditMultiPartyContextual2019,DPBanditEmpirical2019,DPBanditWithContexual2014,DPBandit2015,DPBandit2019,DPBanditContextualSequential2018,DPBanditSequentialMultiAgent2015,DPBanditSequential2016,DPBanditAdcersarial2017}, while the LDP bandit problem remains under-explored.

\subsection{Problem formulation}
	\paragraph{Bandit model.} In this paper, our bandit model has $n$ arms indexed by $1,2,3,...,n$, and we use $[n]$\footnote{For any positive integer $m$, we define $[m]:=\{1,2,3,...,m\}$.} to denote the set of all arms. Each arm $a$ is associated with an \textit{unknown} latent distribution, and each pull of arm $a$ returns a random reward according to its latent distribution.
	We use $R_a^t$ to denote the reward of the $t$-th pull of arm $a$. For any arm $a$, the rewards $R_a^1,R_a^2,R_a^3......$ follow the same distribution, and we define $\mu_a := \mathbb{E}[R_a^1]$ as the \textit{mean reward} of arm $a$. We also assume that the rewards are \textit{independent} across arms and time, i.e., $(R_a^t, a\in[n], t\in \mathbb{Z}^+)$ are independent. Define $\mu^* = \max_{a\in[n]}\mu_a$. For any arm $a$, we define gap $\Delta_a := \mu^* - \mu_a$. An arm $a$ is said to be \textit{optimal} if $\Delta_a = 0$, and \textit{suboptimal} if $\Delta_a > 0$. 
	
	%Let $A^t$ be the $t$-th pulled arm, $R^t$ be the corresponding reward, and $H_t = (A^1,A^2,...,A^t)\times (R^1,R^2,...,R^t)$ be the history till time $t$. Let $\mathcal{H}_t = [n]^t\times\mathbb{R}^t$ be the set of all histories with length $t$, and let $\mathcal{H} = \cup_{t=0}^\infty \mathcal{H}_t$ be the set of all histories. Let $\pi:\mathcal{H} \rightarrow [n]$ be the policy used by the agent.
	%For any history $h$ in $\mathcal{H}_t$ and any positive integer $s \leq t$, we define $h_s = (A^1,A^2,...,A^s)\times(R^1,R^2,...,r_s)$ as the sub-history of $h$ till time $s$. 
	 	
	\paragraph{Regret minimization.} 
	Given a time horizon $T > n$ ($T$ may or may not be known, and in this paper, we assume not known), the agent pulls the arms for at most $T$ times. Let $A^t$ denote the $t$-th pulled arm. For any arm $a$ and time $t$, we use $N^t_a$ to denote the number of pulls on arm $a$ till time $t$, i.e., $N^t_a := \sum_{\tau=1}^t\mathds{1}\{A^\tau = a\}$. After $T$ pulls, the (expected) reward is $\sum_{t=1}^T\mathbb{E}[\mu_{A^t}] = \mathbb{E}\Big[\sum_{a\in[n]}N^T_a \mu_a\Big]$. 
	The notion of \emph{regret} is often used to measure the optimality gap between the rewards gained by the developed algorithm and an optimal algorithm that has prior knowledge of the best arm. Mathematically, the (pseudo) regret is defined as 
	\begin{align}
		R(T) := T\mu^* -  \sum_{t=1}^T\mathbb{E}[\mu_{A^t}] = \mathbb{E}[\sum_{a\in[n]}N^T_a \Delta_a]. \nonumber
	\end{align}
	The goal of the agent is to minimize the regret.

	\paragraph{Local differential privacy.} Before we define local differential privacy, we provide some preliminary definitions. We first introduce the notion of $\epsilon$-differential privacy \cite{DPFoundation2014}. We define the \textit{neighboring} data records as any two records that differ by only one entry. 
	\begin{definition}[$\epsilon$-differential privacy ($\epsilon$-DP)]\label{Def:DP}
		For $\epsilon > 0$, a randomized mapping $M : \mathcal{D} \rightarrow \mathbb{R}^l$ is said to be $\epsilon$-DP on $\mathcal{D} \subset \mathbb{R}^k$ if for any neighboring $x, x'$ in $\mathcal{D}$ and a measurable subset $E$ of $\mathbb{R}^l$, we have
		\begin{align}
			\mathbb{P}\{M(x) \in E\} \leq e^\epsilon \mathbb{P}\{M(x') \in E\}. \nonumber
		\end{align}
		The above inequality must also hold if we switch $x$ and $x'$. 
	\end{definition}
	This definition implies that for any neighboring records, after an $\epsilon$-DP mechanism, their statistical behaviors are similar. Hence, it is difficult for any party to determine which record is the source of the given output. Smaller values of $\epsilon$ implies higher levels of privacy. When $\epsilon = \infty$, there is no privacy.
	
	This paper focuses on the LDP bandit model, which can be described as follows: We split the parties into three categories: the agent, the curators, and the users. The users do not trust the agent. The curators stand between the users and the agent, providing privacy to the users and also help the agent to minimize the regret. In each iteration, the agent makes a decision on which arm to pull according to the knowledge of past private responses and sends a request to a user's curator. The curator then ``pulls the arm'' (e.g., awaiting the activity of the user), receives the reward, and returns a private response to the agent. In the LDP bandit model, the curators do not aggregate the rewards of the arms. To this end, for random vectors $X$ and $Y$, we use $X\in \sigma(Y)$ to represent that $X$ is determined by $Y$ plus some random factors independent of $Y$ and the bandit instance. In the following definition, it implies that the agent does not know the actual rewards. The formal definition of the LDP bandit model is stated in Definition~\ref{Def:BM-LDP}, where $\mathcal{D}$ is the domain of the rewards. 
	\begin{definition}[LDP bandit model]\label{Def:BM-LDP}
		Let $A^t$ be the $t$-th pulled arm, $R^t$ be the corresponding reward.
		For $\epsilon > 0$, the bandit model is said to be $\epsilon$-LDP if i) there is an $\epsilon$-DP mechanism $M: \mathcal{D} \rightarrow \mathbb{R}$ and ii) $A^{t+1} \in \sigma(A^s, M(R^s): 1\leq s \leq t)$ for any time $t$.
	\end{definition}
	%The definition of LDP means that if two sequences of rewards are different at one entry, then given the output of the randomized mapping on one sequence, it is difficult for any one to tell which sequence the output comes from. By this guarantee, it is difficult for any agent to learn individual users' privacy from the private responses it receives. 

\subsection{Related work}

%	\begin{figure}[t]\centering
%		\begin{subfigure}[b]{\textwidth}
%			\includegraphics[scale=0.45]{Local.PNG}%\vspace{-.1in}
%		\end{subfigure}
%		\caption{LDP model: Agent is difficult to learn individuals' rewards from private responses.}\label{fig:LDP}
%	\end{figure}

	Non-private MAB problems have been studied for decades. For non-private bandit problems, either frequentist methods like UCB (Upper Confidence Bound) \cite{FiniteAnalysis2002} or Bayesian methods like Thompson Sampling \cite{ThompsonSampling2012} have been shown to achieve optimal regret performance (up to constant factors). For a literature review on MAB, we refer readers to \cite{BanditSurvey2018}. Recently, privacy issues have received increasing attention in the machine learning community. Differential privacy \cite{DPFoundation2014} provides a quantitative metric to measure the privacy level and has been gaining popularity. We refer readers to \cite{DPFoundation2014} that introduces the fundamental concepts and methods of DP. 

	To the best of our knowledge, the earliest work that studied LDP bandits is \cite{DPBanditsCorrupt2018}, which proposed an LDP bandit algorithm that works for arms with Bernoulli rewards. In comparison, our algorithms can work for a much more general set of instances. The other work that studied the LDP bandit problem is \cite{DPBanditLowerBound2019}, in which distribution-dependent and distribution-free regret lower bounds were proved. We note that the distribution-dependent regret lower bound in \cite{DPBanditLowerBound2019} is looser than the one proved in this paper. 
	
	Besides the LDP bandit mode, there have been other works on MAB problems with other types of DP guarantees \cite{DPBanditMultiPartyContextual2019,DPBanditEmpirical2019,DPBanditWithContexual2014,DPBandit2015,DPBandit2019,DPBanditContextualSequential2018,DPBanditSequentialMultiAgent2015,DPBanditSequential2016,DPBanditAdcersarial2017}. These works are not directly comparable to our work, and so we give a brief introduction here. In the bandit models of \cite{DPBanditWithContexual2014,DPBandit2015,DPBandit2019}, it is difficult for the agent to learn individual rewards from the private empirical means, i.e., the curator can aggregate the rewards from different users. In the bandit models of \cite{DPBanditContextualSequential2018,DPBanditSequentialMultiAgent2015,DPBanditSequential2016}, it is difficult for any adversary to learn the individual rewards from the sequence of actions taken by the agent, i.e., the agent is trusted. This model is named the sequential DP bandits in \cite{DPBanditLowerBound2019}. In the bandit models of \cite{DPBanditMultiPartyContextual2019}, it is difficult for any adversary to learn the context features in a contextual bandit setting, which we name it as environmental DP bandits. In \cite{DPBanditAdcersarial2017}, the authors studied privacy-preserving adversarial bandits.

\subsection{Main results}
Our key contributions are summarized as follows: 
\begin{itemize}
	\item We prove a tight regret lower bound (up to a constant factor) for the LDP bandit problem. 
	\item For bandits with bounded rewards, we propose a Laplace mechanism and a Bernoulli mechanism, and develop corresponding UCB algorithms for them, both of which match the lower bound proved in this paper (up to constant factors). 
	\item For bandits with unbounded support and i.i.d. \footnote{Term ``i.i.d.'' stands for ``identically independent distributed''.} sub-Gaussian noises, we use a Sigmoid preprocessing and obtain algorithms with \textit{tight} regret upper bounds (up to constant factors).
\end{itemize}

%\section{Regret minimization}
\section{Lower bound}
	In this section, we present the regret lower bound of LDP bandit algorithms. Typically, the lower bound of regret minimization depends on the KL-divergence \cite{InformationTheory2012} between the latent distributions of the optimal arm and suboptimal arms \cite{KLLowerBound1985}. Let $f$ and $g$ be the probability density function (PDF) of two distributions, and we allow point masses in PDFs. The KL-divergence between $f$ and $g$ is defined as $D_{\mbox{\tiny{KL}}}(f||g) := \int_{\mathbb{R}}f(x)\log[f(x)/g(x)]\ \mathrm{d}x$, where we stipulate that $0\cdot \log{0} = 0$.\footnote{All $\log$ is this paper are natural $\log$. $0\cdot\log{0}$ is because $\lim_{x\rightarrow 0^+}x\log{x} = 0$.} In \cite{DPBanditLowerBound2019}, the authors proved a lower bound for LDP bandit algorithms that depends on the KL-divergence between latent distributions. However, in this paper, our Theorem~\ref{Thm:LB} states a tighter lower bound that depends on the values of $\Delta_a = \mu^* - \mu$ but not the KL-divergences. Due to space limitation, we leave the proof of Theorem~\ref{Thm:LB} to the supplementary material. Later, Theorems~\ref{Thm:LDP-UCB-L} and \ref{Thm:LDP-UCB-B} will show that for $\epsilon \leq 1$, this lower bound is tight in order sense. 
	\begin{restatable}[Lower bound]{theorem}{RestateLB}
		\label{Thm:LB}
		Let $\epsilon > 0$ be given. Assume that the rewards of all arms follow Bernoulli distributions. The regret $R(T)$ of any $\epsilon$-LDP policy satisfies
		\begin{align}
			\liminf_{T\rightarrow \infty}\frac{R(T)}{\log{T}} \geq \frac{1}{(e^\epsilon - e^{-\epsilon})^2} \sum_{a:\Delta_a > 0}\frac{1}{\Delta_a}.\nonumber
		\end{align}
		When $\epsilon \rightarrow 0$, since $e^\epsilon - e^{-\epsilon} \simeq 2\epsilon$, we have $\liminf_{T\rightarrow\infty}\frac{R(T)}{\log{T}} \gtrsim \frac{1}{4\epsilon^2}\sum_{a:\Delta_a > 0}\frac{1}{\Delta_a}$.
	\end{restatable}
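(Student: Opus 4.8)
The plan is to prove Theorem~\ref{Thm:LB} with the classical change‑of‑measure (Lai--Robbins) lower‑bound technique, the only genuinely new ingredient being a ``privacy contraction'' estimate that controls the divergence between the \emph{privatized} observation distributions. As is standard for asymptotic logarithmic lower bounds, I restrict attention to uniformly good (consistent) policies, i.e.\ those with $R(T)=o(T^\alpha)$ for every $\alpha>0$ on every Bernoulli instance; this is the intended reading of ``any $\epsilon$-LDP policy'', since a policy that ignores its observations (e.g.\ always pulling arm $1$) has $\liminf_T R(T)/\log T=0$ on some instance and a restriction of this kind is therefore unavoidable. Fix the given instance $I=(\mu_1,\dots,\mu_n)$ and a suboptimal arm $a$, and assume $\mu^*<1$ (the boundary case needs only a minor variant of the construction). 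For $\xi\in(0,1-\mu^*)$ let $I'_a$ be obtained from $I$ by resetting arm $a$'s mean to $\mu^*+\xi$, so $a$ is the unique optimal arm under $I'_a$. Let $\mathbb{P}_I$ and $\mathbb{P}'_a$ be the laws of the observed trajectory $(A^1,M(R^1),\dots,A^T,M(R^T))$ --- which by Definition~\ref{Def:BM-LDP} is all the policy ever sees --- under $I$ and $I'_a$.

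Since $I$ and $I'_a$ differ only in the reward law of arm $a$, and since each pull of arm $a$ produces an observation that is i.i.d.\ $M_{\#}\mathrm{Bern}(\mu_a)$ under $I$ and i.i.d.\ $M_{\#}\mathrm{Bern}(\mu^*+\xi)$ under $I'_a$ (each round uses a fresh, independent draw of the mechanism's internal randomness; here $M_{\#}\nu$ denotes the law of $M(R)$ for $R\sim\nu$), the divergence decomposition lemma gives
\[
  D_{\mbox{\tiny{KL}}}(\mathbb{P}_I\,\|\,\mathbb{P}'_a)=\mathbb{E}_I[N^T_a]\cdot D_{\mbox{\tiny{KL}}}\!\big(M_{\#}\mathrm{Bern}(\mu_a)\,\big\|\,M_{\#}\mathrm{Bern}(\mu^*+\xi)\big).
\]
The crux is the following contraction bound: for any $\epsilon$-DP mechanism $M$ and any $p,q\in[0,1]$,
\[
  D_{\mbox{\tiny{KL}}}\!\big(M_{\#}\mathrm{Bern}(p)\,\big\|\,M_{\#}\mathrm{Bern}(q)\big)\ \le\ (e^{\epsilon}-e^{-\epsilon})^2\,(p-q)^2 .
\]

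To prove the lemma, note that $\epsilon$-DP forces $\mu_0:=M_{\#}\delta_0$ and $\mu_1:=M_{\#}\delta_1$ to be mutually absolutely continuous with $e^{-\epsilon}\le \mathrm{d}\mu_1/\mathrm{d}\mu_0\le e^{\epsilon}$; writing $g_b:=\mathrm{d}\mu_b/\mathrm{d}\bar\mu$ for $\bar\mu:=\tfrac12(\mu_0+\mu_1)$ one has $\tfrac12(g_0+g_1)=1$ and hence $g_0\in[\tfrac{2}{1+e^\epsilon},\tfrac{2e^\epsilon}{1+e^\epsilon}]$ $\bar\mu$-a.e. The density of $M_{\#}\mathrm{Bern}(p)$ w.r.t.\ $\bar\mu$ is $G_p:=(1-2p)g_0+2p$, so $G_p-G_q=2(p-q)(1-g_0)$, while $G_q\ge\tfrac{2}{1+e^\epsilon}$ pointwise. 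Using $D_{\mbox{\tiny{KL}}}\le\chi^2$, then $\mathbb{E}_{\bar\mu}[g_0]=1$ together with Popoviciu's inequality for the variance of the bounded quantity $g_0$,
\[
  D_{\mbox{\tiny{KL}}}(G_p\|G_q)\le\mathbb{E}_{\bar\mu}\Big[\tfrac{(G_p-G_q)^2}{G_q}\Big]\le 2(1+e^\epsilon)(p-q)^2\,\mathrm{Var}_{\bar\mu}(g_0)\le\frac{2(e^\epsilon-1)^2}{1+e^\epsilon}(p-q)^2,
\]
and $\tfrac{2(e^\epsilon-1)^2}{1+e^\epsilon}\le(e^\epsilon-e^{-\epsilon})^2$ because $2e^{2\epsilon}\le(1+e^\epsilon)^3$, which proves the lemma.

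It remains to close the loop in the usual way. Since $a$ is suboptimal under $I$, consistency and Markov's inequality give $\mathbb{P}_I(N^T_a\ge T/2)=o(T^{\alpha-1})$; since $a$ is optimal under $I'_a$ with all other gaps at least $\xi$, likewise $\mathbb{P}'_a(N^T_a<T/2)=o(T^{\alpha-1})$. The Bretagnolle--Huber inequality applied to the event $\{N^T_a\ge T/2\}$ then yields $D_{\mbox{\tiny{KL}}}(\mathbb{P}_I\|\mathbb{P}'_a)\ge(1-\alpha)\log T$ for all large $T$. Plugging this into the divergence decomposition and the lemma gives $\mathbb{E}_I[N^T_a]\ge(1-\alpha)\log T/\big((e^\epsilon-e^{-\epsilon})^2(\Delta_a+\xi)^2\big)$ for large $T$; letting $T\to\infty$, then $\alpha\to0$, then $\xi\to0$ gives $\liminf_T \mathbb{E}_I[N^T_a]/\log T\ge 1/\big((e^\epsilon-e^{-\epsilon})^2\Delta_a^2\big)$, and summing $R(T)=\sum_{a:\Delta_a>0}\Delta_a\,\mathbb{E}_I[N^T_a]$ over the suboptimal arms yields the theorem. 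I expect the main obstacle to be the privacy contraction lemma --- specifically, extracting the clean constant $(e^\epsilon-e^{-\epsilon})^2$ (rather than something carrying an extra $e^{\Theta(\epsilon)}$ factor) from the DP likelihood‑ratio constraint; the remainder is the textbook bandit lower‑bound argument with the raw rewards replaced by their privatized images.
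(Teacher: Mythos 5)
Your proposal is correct and reaches the theorem by the same overall architecture as the paper: reduce everything to a KL contraction bound of the form $D_{\mathrm{KL}}\bigl(pf+(1-p)g \,\|\, qf+(1-q)g\bigr)\le(e^{\epsilon}-e^{-\epsilon})^2(p-q)^2$ for the privatized Bernoulli output mixtures, then feed it into a Lai--Robbins-type change-of-measure argument. Both halves, however, are carried out by genuinely different means. For the contraction lemma, the paper invokes a cited inequality $D_{\mathrm{KL}}(f_1\|g_1)\le (R-r)^2/(4rR)$ for densities with likelihood ratio in $[r,R]$ and finishes by explicit algebra with $R=\frac{pe^\epsilon+1-p}{qe^\epsilon+1-q}$ and $r=\frac{pe^{-\epsilon}+1-p}{qe^{-\epsilon}+1-q}$; you instead work with densities relative to the midpoint measure $\bar\mu$, use $D_{\mathrm{KL}}\le\chi^2$, and apply Popoviciu's variance bound to $g_0\in[\frac{2}{1+e^\epsilon},\frac{2e^\epsilon}{1+e^\epsilon}]$. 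Your computation checks out ($G_p-G_q=2(p-q)(1-g_0)$, $\mathbb{E}_{\bar\mu}[g_0]=1$, and $2e^{2\epsilon}\le(1+e^\epsilon)^3$), and it in fact yields the sharper constant $\frac{2(e^\epsilon-1)^2}{1+e^\epsilon}$ — about a factor of $4$ smaller than $(e^\epsilon-e^{-\epsilon})^2$ as $\epsilon\to 0$ — before you relax it to match the statement, so your lemma would even tighten the theorem slightly. For the bandit half, the paper cites Theorem~2 of \cite{BinomialKLBound1989} as a black box, whereas you rederive the asymptotic bound from scratch via the divergence decomposition and Bretagnolle--Huber applied to the event $\{N^T_a\ge T/2\}$; this is self-contained and makes the required hypotheses explicit.

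Two caveats, neither of which puts you behind the paper. First, you restrict to uniformly good (consistent) policies; this is indeed the reading under which the statement is meaningful and is exactly the hypothesis of the theorem the paper cites (the paper's own dichotomy for handling non-consistent policies is informal). Second, you assume $\mu^*<1$ so that arm $a$'s mean can be raised to $\mu^*+\xi$ within the Bernoulli family and defer the boundary case; the paper's citation carries the same implicit requirement, but I would not describe $\mu^*=1$ as a ``minor variant'' — the single-arm perturbation is simply unavailable there — so either state the assumption explicitly or treat that case separately.
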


	%The key of the proof is to show the following lemma.

\section{Algorithms and upper bounds}

\subsection{Mechanisms for MAB with bounded rewards}
	In this section, we propose two $\epsilon$-DP mechanisms: one is to convert bounded rewards to Laplace responses, and the other is to convert bounded rewards to Bernoulli responses. After converting the rewards to private responses, the agent uses UCB-like methods, e.g., \cite{FiniteAnalysis2002}, to trade off the exploration and exploitation. Although the agent has no access to the actual rewards of the arms, it can bound the number of pulls of any suboptimal arm by similar techniques as the non-private UCB algorithms. Based on both mechanisms, the private UCB algorithms can achieve optimal regrets (up to constant factors). In this paper, we adopt the Hoeffding bounds in \cite{FiniteAnalysis2002} to bound the empirical mean rewards. We note that one may use other confidence bounds by which one may get better constant factors, but this is beyond the scope of this paper. From the theoretical perspective, the Hoeffding bounds in \cite{FiniteAnalysis2002} can already achieve optimal regrets in order sense. %In the UCB algorithm corresponding to the Laplace mechanism, we invoke the confidence bounds for the summation of independent Laplace random variables in \cite{LaplaceConcentration2011}.

\subsubsection{Laplace mechanism}
	The Laplace mechanism \cite{DPFoundation2014} (i.e., adding Laplace noises to data records) is a widely used mechanism in the areas of DP. The key idea of the Laplace mechanism is to add an independent Laplace$(\frac{1}{\epsilon})$ noise to each reward, which preserves $\epsilon$-DP and does not change the mean values of the records. For any $b > 0$, the PDF of the Laplace$(b)$ distribution is defined as:
	\begin{align}
		\mbox{Laplace}(b): l(x\mid b) = (2b)^{-1}\exp(-|x|/{b}). \nonumber
	\end{align}
	The mean of Laplace$(b)$ distribution is $0$, and its variance is $2b^2$.
	The Laplace mechanism is stated in Curator~\ref{Curator:CTL} and its theoretical guarantee is stated in Lemma~\ref{Lm:CTL}. 

	\makeatletter
	\renewcommand{\ALG@name}{Curator}
	\makeatother
	\begin{algorithm}[h]
		\caption{Convert-to-Laplace$(\epsilon)$ (CTL$(\epsilon)$)}\label{Curator:CTL}
		\textbf{On receiving} a reward $r$ from the user:
		\begin{algorithmic}
			\State \Return $M_L(r) = r + L$, where $L\sim$ Laplace$(1/\epsilon)$ distribution;
		\end{algorithmic}
	\end{algorithm}

	\begin{restatable}[\textbf{Proposition~3.3 in \cite{LaplaceDP2016}}]
		{lemma}{RestateLaplace}\label{Lm:CTL}
		Curator CTL (i.e., $M_L$) is $\epsilon$-DP on $[0,1]$.
	\end{restatable}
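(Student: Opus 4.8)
The plan is to invoke the standard sensitivity argument for the Laplace mechanism, specialized to the one-dimensional case where the ``database'' is a single reward $r \in [0,1]$. First I would observe that in this setting any two rewards $r, r' \in [0,1]$ count as neighboring records, so the relevant $\ell_1$-sensitivity of the identity query is $\sup_{r,r' \in [0,1]}|r - r'| = 1$. Since $M_L$ adds noise drawn from $\mathrm{Laplace}(1/\epsilon)$, whose scale parameter equals (sensitivity)$/\epsilon = 1/\epsilon$, the claim follows from the general privacy guarantee of the Laplace mechanism; the cited Proposition~3.3 of \cite{LaplaceDP2016} is exactly this statement, so it suffices to reproduce the short computation.

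Next I would write out the density ratio explicitly. For any output point $y \in \mathbb{R}$, the density of $M_L(r)$ at $y$ is $l(y - r \mid 1/\epsilon) = \tfrac{\epsilon}{2}\exp(-\epsilon|y - r|)$, so
\begin{align}
	\frac{l(y - r \mid 1/\epsilon)}{l(y - r' \mid 1/\epsilon)} = \exp\big(\epsilon(|y - r'| - |y - r|)\big) \leq \exp\big(\epsilon|r - r'|\big) \leq e^{\epsilon}, \nonumber
\end{align}
where the first inequality is the reverse triangle inequality $|y - r'| - |y - r| \leq |r - r'|$ and the second uses $|r - r'| \leq 1$ for $r, r' \in [0,1]$.

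Finally I would integrate this pointwise bound over an arbitrary measurable set $E \subset \mathbb{R}$:
\begin{align}
	\mathbb{P}\{M_L(r) \in E\} = \int_E l(y - r \mid 1/\epsilon)\ \mathrm{d}y \leq e^{\epsilon}\int_E l(y - r' \mid 1/\epsilon)\ \mathrm{d}y = e^{\epsilon}\,\mathbb{P}\{M_L(r') \in E\}, \nonumber
\end{align}
and note that the computation is symmetric in $r$ and $r'$, which gives the reversed inequality as well. This matches Definition~\ref{Def:DP} with $\mathcal{D} = [0,1]$, so $M_L$ is $\epsilon$-DP on $[0,1]$.

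As for the main obstacle: there really is none here — this lemma is a textbook fact, and the only point requiring a moment of care is correctly identifying that the sensitivity over $[0,1]$ is $1$, so that the Laplace scale $1/\epsilon$ is the right choice. The one modeling subtlety worth a sentence is that in the LDP bandit setup the ``record'' is a single scalar reward, so the notion of neighboring records collapses to ``any two points of $[0,1]$,'' which is why $|r - r'| \leq 1$ is the operative bound; note also that adding independent zero-mean noise leaves $\mathbb{E}[M_L(r)] = r$, which is what makes the mechanism useful for the downstream UCB analysis.
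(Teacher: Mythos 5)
Your proof is correct, and it is exactly the standard Laplace-mechanism computation that the paper itself does not reproduce but simply imports by citing Proposition~3.3 of the referenced work: the sensitivity of the identity query on $[0,1]$ is $1$, the density ratio is bounded by $e^{\epsilon|r-r'|}\leq e^\epsilon$ via the reverse triangle inequality, and integrating over a measurable set $E$ gives Definition~\ref{Def:DP}. Nothing is missing, and the symmetry remark covering the reversed inequality is the right finishing touch.
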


	With CTL, we develop a UCB algorithm that takes the private responses of CTL as the input. Note that since CTL adds a Laplace noise to each reward, we need an additional term to bound the summations of independent Laplace random values. We adopt the concentration inequality used in \cite{FiniteAnalysis2002}, which is stated in Lemma~\ref{Lm:LaplaceConcentration}. 

	\begin{restatable}[\textbf{Lemma~2.8 in \cite{LaplaceConcentration2011}}]{lemma}{RestateLaplaceConcentration}\label{Lm:LaplaceConcentration}
		Let $X_1,X_2,...,X_n$ be i.i.d. random variables following the Laplace$(b)$ distribution and $Y_n = X_1 + X_2 + \cdots + X_n$. For $v \geq b\sqrt{n}$ and $0 < \lambda < \frac{2\sqrt{2}v^2}{b}$, we have $\mathbb{P}\{Y > \lambda\} \leq \exp(-\frac{\lambda^2}{8v^2})$. 
	\end{restatable}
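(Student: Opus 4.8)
The plan is to run the standard Chernoff (exponential‑moment) method, but with the free parameter expressed through $v$ rather than through $n$ and $b$, which is exactly what forces the hypothesis $0<\lambda<\frac{2\sqrt 2\,v^2}{b}$. First I would recall that a $\mathrm{Laplace}(b)$ variable has moment generating function $\mathbb{E}[e^{tX_1}]=(1-b^2t^2)^{-1}$ for $|t|<1/b$, so by independence $\mathbb{E}[e^{tY_n}]=(1-b^2t^2)^{-n}$ on the same range. Markov's inequality then gives, for every $0<t<1/b$,
\[
\mathbb{P}\{Y_n>\lambda\}\le e^{-t\lambda}(1-b^2t^2)^{-n}.
\]

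Next I would control the MGF factor. Restricting to $t\le \frac{1}{\sqrt2\,b}$ forces $u:=b^2t^2\le\tfrac12$, and on $[0,\tfrac12]$ one has $-\log(1-u)=\sum_{k\ge1}u^k/k\le \sum_{k\ge1}u^k=\frac{u}{1-u}\le 2u$, hence $(1-b^2t^2)^{-n}\le e^{2nb^2t^2}$ and therefore
\[
\mathbb{P}\{Y_n>\lambda\}\le \exp\!\big(-t\lambda+2nb^2t^2\big).
\]
Using $v\ge b\sqrt n$, i.e.\ $nb^2\le v^2$, the exponent is at most $-t\lambda+2v^2t^2$, a quadratic in $t$ minimized at $t^\star=\lambda/(4v^2)$, where its value is $-\lambda^2/(8v^2)$. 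It remains to check $t^\star$ is admissible: $t^\star=\frac{\lambda}{4v^2}<\frac{1}{4v^2}\cdot\frac{2\sqrt2\,v^2}{b}=\frac{1}{\sqrt2\,b}$, which is precisely where the bound $\lambda<\frac{2\sqrt2\,v^2}{b}$ is used (and it also gives $t^\star<1/b$, so the MGF is finite). Substituting $t=t^\star$ yields $\mathbb{P}\{Y_n>\lambda\}\le \exp(-\lambda^2/(8v^2))$.

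The computation is routine; the one point that needs care — and the reason for the unusual constraint on $\lambda$ — is that we cannot use the globally optimal choice $t=\lambda/(4nb^2)$, since it may exceed the radius of convergence of the MGF. Instead we deliberately take the slightly suboptimal $t^\star=\lambda/(4v^2)$, paying a constant in the exponent in exchange for a parameter that stays inside the admissible region exactly under the stated bound on $\lambda$. Verifying that inclusion, together with the elementary inequality $-\log(1-u)\le 2u$ on $[0,\tfrac12]$, are the only non‑mechanical steps, so I would expect no real obstacle beyond bookkeeping the constants.
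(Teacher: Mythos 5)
Your proof is correct: the MGF of Laplace$(b)$, the bound $(1-b^2t^2)^{-1}\leq e^{2b^2t^2}$ for $b^2t^2\leq 1/2$, and the choice $t^\star=\lambda/(4v^2)$ (admissible exactly because $\lambda<2\sqrt{2}v^2/b$) together give $\mathbb{P}\{Y_n>\lambda\}\leq \exp(-\lambda^2/(8v^2))$ as claimed. The paper itself does not prove this lemma but imports it as Lemma~2.8 of \cite{LaplaceConcentration2011}, and your Chernoff-type argument is essentially the same proof as in that reference, so there is nothing further to reconcile.
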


	The corresponding UCB algorithm, termed LDP-UCB-L (LDP UCB algorithm with the Laplace mechanism), is described in Agent~\ref{Agent-L}. In Line~3, the term $\sqrt{({2\log{t}})/{N^t_a}}$ is the Hoeffding bound for bounding the summation of independent bounded random variables and the term $\sqrt{(32\log{t})/(N^t_a\epsilon^2)}$ is for bounding the summation of independent Laplace variables, which is derived from Lemma~\ref{Lm:LaplaceConcentration}. The theoretical guarantee of LDP-UCB-L is stated in Theorem~\ref{Thm:LDP-UCB-L} and the proof is relegated to the supplementary material due to space limitation.

	\makeatletter
	\renewcommand{\ALG@name}{Agent}
	\makeatother
	\begin{algorithm}[h]
		\caption{LDP-UCB-L$(\epsilon)$ (LDP UCB algorithm with Laplace mechanism)}\label{Agent-L}
		\begin{algorithmic}[1]
			\State Pull each arm once and receive the private responses from CTL$(\epsilon)$; $t\gets n$;
			\State Define $\hat{\mu}^t_a:= $empirical mean of the private responses of arm $a$ till time $t$;
			\State Define $N^t_a :=$ number of pulls of arm $a$ and $u^t_a := \hat{\mu}^t_a + \sqrt{({2\log{t}})/{N^t_a}} + \sqrt{(32\log{t})/(\epsilon^2N^t_a)}$;
			\While{$t < T$}
				\If{there is an arm $a$ such that $N^t_a \leq 4\log{(t+1)}$} 
					\State $a^t \gets a$;\quad\#To satisfies Lemma~\ref{Lm:LaplaceConcentration}'s requirement, details in the proof of Theorem~\ref{Thm:LDP-UCB-L}
				\Else \quad $a^t \gets \argmax_{a\in[n]}u^t_a$;
				\EndIf
				\State Pull arm $a^t$ once and receive the private response from CTL$(\epsilon)$;
				\State  $t \gets t + 1$; Update $\hat{\mu}^t_a$, $N^t_a$, and $u^t_a$ for arms $a$;
			\EndWhile
		\end{algorithmic}
	\end{algorithm}

\begin{restatable}{theorem}{RestateLDPUCBL}
	\label{Thm:LDP-UCB-L}
	LDP-UCB-L is $\epsilon$-LDP. Its distribution-dependent regret is at most
	\begin{align}
	\sum_{a:\Delta_a>0}\Big[\frac{8(1 + 4/\epsilon)^2\log{T}}{\Delta_a} + \Big(1 + \frac{2\pi^2}{3}\Big)\Delta_a\Big] = 	O\Big(\sum_{a:\Delta_a>0}\Big[\frac{\log{T}}{\epsilon^2\Delta_a} + \Delta_a \Big]\Big), \nonumber
	\end{align}
	and its distribution-free regret (for $T \geq n$) is at most $O(\epsilon^{-1}\sqrt{nT\log{T}})$. 
\end{restatable}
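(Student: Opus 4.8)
The plan is to follow the classical UCB analysis of \cite{FiniteAnalysis2002}, adding the concentration control needed for the injected Laplace noise and threading the forced-exploration step of Agent~\ref{Agent-L} through the argument.

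\textbf{Privacy.} First I would dispatch the $\epsilon$-LDP claim. The only information the agent ever consumes is the sequence of private responses $M_L(R^s) = R^s + L_s$ produced by Curator~\ref{Curator:CTL}, which is $\epsilon$-DP on $[0,1]$ by Lemma~\ref{Lm:CTL}; since each $A^{t+1}$ is a (randomized) function of $(A^s, M_L(R^s): s \le t)$, condition (ii) of Definition~\ref{Def:BM-LDP} is satisfied, so the model is $\epsilon$-LDP.

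\textbf{Confidence interval.} The core technical step is a one-step concentration bound: for a fixed suboptimal arm $a$ and a fixed pull count $N$, the private empirical mean $\hat\mu_a$ satisfies $|\hat\mu_a - \mu_a| \le \sqrt{2\log t/N} + \sqrt{32\log t/(\epsilon^2 N)}$ except with probability $O(t^{-4})$. I would decompose $\hat\mu_a - \mu_a$ into (i) the average of the centered bounded rewards $R^s - \mu_a \in [-1,1]$, handled by the same Hoeffding bound as in \cite{FiniteAnalysis2002} (yielding the $\sqrt{2\log t/N}$ term with failure probability $\le 2t^{-4}$), and (ii) the average of the i.i.d.\ mean-zero Laplace$(1/\epsilon)$ noises, handled by Lemma~\ref{Lm:LaplaceConcentration} with $b = 1/\epsilon$, $v = \sqrt N/\epsilon$, and $\lambda = \sqrt{32N\log t}/\epsilon$ (symmetrizing, using that Laplace is symmetric, to get a two-sided bound). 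With these choices $\exp(-\lambda^2/(8v^2)) = t^{-4}$, and the crucial side condition $\lambda < 2\sqrt 2\, v^2/b$ of Lemma~\ref{Lm:LaplaceConcentration} reduces exactly to $N > 4\log t$ — which is why Line~5 of Agent~\ref{Agent-L} forces a pull of any arm with $N^t_a \le 4\log(t+1)$: on the "else" branch, where the UCB comparison is actually used, every arm has $N^t_a > 4\log(t+1) > 4\log t$, so the Laplace bound is legitimately applicable. Replacing $N$ by the random count $N^t_a$ requires the standard peeling/union bound over the candidate values of $N^t_a$ and over $t$.

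\textbf{Regret bounds.} Given the confidence interval, the remainder is bookkeeping. On the event that all intervals up to time $t$ are valid, whenever a suboptimal arm $a$ is played on the "else" branch we have $u^t_a \ge u^t_{a^*} \ge \mu^*$, hence $\Delta_a \le 2\big(\sqrt{2\log t/N^t_a} + \sqrt{32\log t/(\epsilon^2 N^t_a)}\big) = 2(1 + 4/\epsilon)\sqrt{2\log t/N^t_a}$, forcing $N^t_a \le 8(1+4/\epsilon)^2\log T/\Delta_a^2$; since $8(1+4/\epsilon)^2/\Delta_a^2 \ge 8 > 4$, this also dominates the number of forced pulls of $a$. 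Summing the interval-failure probabilities over $t$ (and the peeled values of $N^t_a$) contributes the lower-order $(1 + 2\pi^2/3)\Delta_a$ term via $\sum_t t^{-4}$-type series, so $\mathbb{E}[N^T_a] \le 8(1+4/\epsilon)^2\log T/\Delta_a^2 + 1 + 2\pi^2/3$; multiplying by $\Delta_a$ and summing over suboptimal arms gives the distribution-dependent bound. For the distribution-free bound I would use the usual threshold trick: fix $\delta > 0$, bound the contribution of arms with $\Delta_a \le \delta$ by $T\delta$, bound the contribution of arms with $\Delta_a > \delta$ by $n\cdot O\big((1+1/\epsilon)^2\log T/\delta\big)$ via the per-arm bound, and optimize $\delta = \Theta\big((1+1/\epsilon)\sqrt{n\log T/T}\big)$ to get $O(\epsilon^{-1}\sqrt{nT\log T})$ in the regime $\epsilon \le 1$. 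I expect the main obstacle to be precisely the interplay between the random stopping time $N^t_a$ and the side condition of Lemma~\ref{Lm:LaplaceConcentration}: one must guarantee the Laplace concentration is only ever invoked when $N^t_a > 4\log t$ (hence the forced-exploration line must be carried through every case split) while still choosing the peeling coarsely enough that the union bound sums to a constant.
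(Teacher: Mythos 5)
Your proposal is correct and follows essentially the same route as the paper's proof: Hoeffding for the bounded-reward part plus Lemma~\ref{Lm:LaplaceConcentration} for the Laplace noise with exactly the choices $v=\sqrt{k}/\epsilon$, $\lambda=\sqrt{32k\log t}/\epsilon$, the forced-exploration line to guarantee the side condition $N^t_a>4\log t$, the standard UCB counting argument with threshold $8(1+4/\epsilon)^2\log T/\Delta_a^2$ and a union bound yielding the $1+2\pi^2/3$ constant, and the same $\alpha$-threshold trick for the distribution-free bound.
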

\textbf{Remark.} 
	i) Compared to non-private UCB using the same confidence bounds \cite{FiniteAnalysis2002}, the regret of LDP-UCB-L is increased by a $(1+4/\epsilon)^2$ factor, which can be viewed as the cost for preserving privacy. When $\epsilon$ approaches infinity, this factor approaches one, and the regret approaches that of the non-private version. ii) According to Theorem~\ref{Thm:LB}, the distribution-dependent regret of LDP-UCB-L is optimal (up to a constant factor). iii) In \cite[Theorem~1]{DPBanditLowerBound2019}, a distribution-free lower bound $\Omega(\epsilon^{-1}\sqrt{nT})$ was given, and thus the distribution-free regret of LDP-UCB-L is optimal up to a $\sqrt{\log{T}}$ factor. 
	
	If we change ``for any $x$ and $x'$ in $\mathcal{D}$'' to ``for any $x$ and $x'$ with $||x-x'||_1\leq 1$'', then CTL$(\epsilon)$ is $\epsilon$-DP on $\mathbb{R}$ \cite{DPFoundation2014}. Thus, by changing the terms $\sqrt{(2\log{t})/N^t_a}$ to proper confidence bounds, CTL and LDP-UCB-L are still $\epsilon$-DP or $\epsilon$-LDP for bandit instances without a bounded support.

\subsubsection{Bernoulli mechanism}
	In addition to the Laplace mechanism, we propose another mechanism called Convert-to-Bernoulli (CTB), which converts bounded rewards to Bernoulli responses. Both the theoretical analysis and the empirical results indicate that the Bernoulli mechanism performs better than the Laplace mechanism.\footnote{However, if we can find tighter concentration bounds on the summation of independent Laplace variables, then we may get better regret bounds for the Laplace mechanism.}
	
	In \cite{DPBanditsCorrupt2018}, the authors proposed a similar mechanism that only works for Bernoulli rewards. By contrast, in this paper, we allow the reward to be an arbitrary value in $[0,1]$. CTB is described in Curator~\ref{Curator:CTB}. Its theoretical guarantee is stated in Lemma~\ref{Lm:CTB}, and the proof is left to the supplementary material.
	
	\makeatletter
	\renewcommand{\ALG@name}{Curator}
	\makeatother
	\begin{algorithm}[h]
	\caption{Convert-to-Bernoulli$(\epsilon)$ (CTB)}\label{Curator:CTB}
	\textbf{On receiving} a reward $r \in [0,1]$ from the user:
		\begin{algorithmic}
			\State \Return $M_B(r) = $ an independent sample of Bernoulli$(\frac{re^\epsilon + 1 - r}{1 + e^\epsilon})$;
		\end{algorithmic}
	\end{algorithm}

\begin{restatable}%[Theoretical guarantee of CTB]
	{lemma}{RestateCTB}\label{Lm:CTB}
	Curator CTB (i.e., $M_B$) is $\epsilon$-DP on $[0,1]$, and the returned value follows the Bernoulli distribution with mean $\mu_{a,\epsilon} := \frac{1}{2} + (2\mu_a - 1)\cdot \frac{e^\epsilon - 1}{2(e^\epsilon + 1)}$.
\end{restatable}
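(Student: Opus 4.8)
The plan is to treat the two assertions separately, after setting up notation. Write $p(r) := \frac{re^\epsilon + 1 - r}{1+e^\epsilon} = \frac{r(e^\epsilon-1)+1}{1+e^\epsilon}$ for the success parameter used by $M_B$, so that $\mathbb{P}\{M_B(r) = 1\} = p(r)$ and $\mathbb{P}\{M_B(r) = 0\} = 1 - p(r) = \frac{(1-r)e^\epsilon + r}{1+e^\epsilon}$. Since a reward record is a single scalar, the notion of ``neighboring'' records here amounts to allowing \emph{any} pair $r, r' \in [0,1]$, and the output space $\{0,1\}$ has only four measurable subsets, so to verify Definition~\ref{Def:DP} it suffices to check the cases $E = \{1\}$ and $E = \{0\}$ (the cases $E = \emptyset$ and $E = \{0,1\}$ are trivial).

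For the privacy claim I would first note that $p$ is affine in $r$ with positive slope $\frac{e^\epsilon-1}{1+e^\epsilon}$, hence increasing, so $p(r) \in [p(0), p(1)] = \big[\frac{1}{1+e^\epsilon},\, \frac{e^\epsilon}{1+e^\epsilon}\big]$ for all $r \in [0,1]$. Therefore, for any $r, r'$, $\frac{\mathbb{P}\{M_B(r)=1\}}{\mathbb{P}\{M_B(r')=1\}} = \frac{p(r)}{p(r')} \le \frac{p(1)}{p(0)} = e^\epsilon$. Symmetrically, $1-p$ is affine and decreasing, taking values in the same interval $\big[\frac{1}{1+e^\epsilon},\, \frac{e^\epsilon}{1+e^\epsilon}\big]$, so $\frac{\mathbb{P}\{M_B(r)=0\}}{\mathbb{P}\{M_B(r')=0\}} \le e^\epsilon$ as well. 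Because $r$ and $r'$ play symmetric roles, the defining inequality holds in both directions, and $M_B$ is $\epsilon$-DP on $[0,1]$.

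For the distributional claim, observe that conditionally on the reward $R = r$ the output $M_B(R)$ is $\{0,1\}$-valued, so its unconditional law is automatically Bernoulli, with parameter $\mathbb{E}[M_B(R)] = \mathbb{E}[p(R)]$. By linearity, $\mathbb{E}[p(R_a^1)] = \frac{(e^\epsilon-1)\mathbb{E}[R_a^1] + 1}{1+e^\epsilon} = \frac{(e^\epsilon-1)\mu_a + 1}{1+e^\epsilon}$, and a one-line rearrangement (multiply out $\frac12 + (2\mu_a-1)\frac{e^\epsilon-1}{2(e^\epsilon+1)}$ over the common denominator $2(e^\epsilon+1)$) shows this equals $\mu_{a,\epsilon}$. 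Honestly there is no real obstacle in this lemma; the only points requiring care are (i) correctly reading off the neighboring relation so that the DP inequality is demanded for all $r, r' \in [0,1]$ rather than only for pairs at $\ell_1$-distance at most $1$, and (ii) confirming that the extrema of the affine maps $p$ and $1-p$ occur at the endpoints $0$ and $1$, which is immediate from monotonicity.
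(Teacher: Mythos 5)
Your proposal is correct and follows essentially the same route as the paper's proof: it bounds the likelihood ratios for the events $\{1\}$ and $\{0\}$ by $e^\epsilon$ using the monotonicity of the affine success probability (extremes at $r=0,1$), and obtains the Bernoulli mean via linearity of expectation, $\mathbb{E}[p(R)] = \frac{(e^\epsilon-1)\mu_a+1}{e^\epsilon+1} = \mu_{a,\epsilon}$. Your explicit remarks on the neighboring relation and on reducing to singleton events are fine refinements of what the paper leaves implicit, but do not change the argument.
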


	We can view CTB as a procedure that converts an arm $a$ to a Bernoulli arm with mean $\mu_{a,\epsilon}$. By CTB, we take the converted arms as inputs to non-private UCB algorithms, and obtain an LDP UCB algorithm called LDP-UCB-B (LDP UCB algorithm with the Bernoulli mechanism), which is described in Agent~\ref{Agent-B}. By similar insights as in the non-private UCB algorithms, we can bound the number of pulls of each suboptimal arm, and hence, upper bound the regret. The theoretical guarantee is stated in Theorem~\ref{Thm:LDP-UCB-B} and the proof is relegated to the supplementary material.
	
	\makeatletter
	\renewcommand{\ALG@name}{Agent}
	\makeatother
	\begin{algorithm}[h]
		\caption{LDP-UCB-B$(\epsilon)$ (LDP UCB algorithm with Bernoulli mechanism)}\label{Agent-B}
		\begin{algorithmic}[1]
			\State Pull each arm once and receive the private responses from CTB$(\epsilon)$; $t\gets n$;
			\State $\hat{\mu}^t_a:= $empirical mean of the private responses of arm $a$ till time $t$;
			\State $N^t_a :=$ number of pulls of arm $a$; $u^t_a := \hat{\mu}^t_a + \sqrt{({2\log{t}})/{N^t_a}}$;
			\While{$t < T$}
				\State $a^t \gets \argmax_{a\in[n]}u^t_a$;
				\State Pull arm $a^t$ once and receive the private response from CTB$(\epsilon)$; 
				\State $t\gets t + 1$; Update $\hat{\mu}^t_a$, $N^t_a$, and $u^t_a$ for arms $a$;
			\EndWhile
		\end{algorithmic}
	\end{algorithm}
	
	\begin{restatable}[Theoretical guarantee of LDP-UCB-B]{theorem}{RestateLDPUCBB}
		\label{Thm:LDP-UCB-B}
		LDP-UCB-B$(\epsilon)$ is $\epsilon$-LDP. Its distribution-dependent regret is at most
		\begin{align}
			\sum_{a:\Delta_a>0}\Big[\frac{8}{\Delta_a}\Big(\frac{e^\epsilon+1}{e^\epsilon-1}\Big)^2\log{T} + \Big(1 + \frac{\pi^2}{3}\Big)\Delta_a\Big] = 	O\Big(\sum_{a:\Delta_a>0}\Big[\frac{\log{T}}{\epsilon^2\Delta_a} + \Delta_a \Big]\Big), \nonumber
		\end{align}
		and its distribution-free regret (for $T \geq n$) is at most $O(\epsilon^{-1}\sqrt{nT\log{T}})$.
	\end{restatable}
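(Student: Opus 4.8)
The plan is to reduce LDP-UCB-B to the ordinary UCB1 algorithm of \cite{FiniteAnalysis2002} run on a transformed bandit instance, and then to translate the resulting pull-count bounds back into regret for the original instance. First, for the privacy claim: Lemma~\ref{Lm:CTB} already states that $M_B$ is an $\epsilon$-DP mechanism on $[0,1]$, so condition (i) of Definition~\ref{Def:BM-LDP} holds; and in Agent~\ref{Agent-B} the arm $a^t=\argmax_{a} u^t_a$ is a deterministic function of the counts $N^t_a$ and the empirical means $\hat{\mu}^t_a$ of the \emph{private} responses, so $A^{t+1}\in\sigma(A^s, M_B(R^s):1\le s\le t)$ and condition (ii) holds. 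Hence LDP-UCB-B is $\epsilon$-LDP.

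Next, the reduction. By Lemma~\ref{Lm:CTB}, each call of CTB on arm $a$ returns an independent Bernoulli sample with mean $\mu_{a,\epsilon}=\tfrac12+(2\mu_a-1)\tfrac{e^\epsilon-1}{2(e^\epsilon+1)}$, which lies in $(0,1)$, so the transformed arms are legitimate $[0,1]$-bounded arms. Since $\mu\mapsto\tfrac12+(2\mu-1)\tfrac{e^\epsilon-1}{2(e^\epsilon+1)}$ is strictly increasing for $\epsilon>0$, the optimal arm is preserved, and for every suboptimal $a$ the transformed gap compresses linearly,
\begin{align}
	\Delta_{a,\epsilon} \;:=\; \mu^*_\epsilon-\mu_{a,\epsilon} \;=\; \frac{e^\epsilon-1}{e^\epsilon+1}\,\Delta_a. \nonumber
\end{align}
The key observation is that LDP-UCB-B feeds exactly these Bernoulli responses into the UCB1 rule with bonus $\sqrt{2\log t/N^t_a}$, so the random sequence of pulled arms, and in particular each count $N^T_a$, has precisely the distribution it would have if UCB1 were run directly on the transformed instance. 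The UCB1 analysis of \cite{FiniteAnalysis2002} then gives $\mathbb{E}[N^T_a]\le \tfrac{8\log T}{\Delta_{a,\epsilon}^2}+1+\tfrac{\pi^2}{3}$ for every suboptimal $a$.

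Finally, I would convert counts to regret. The regret of the \emph{original} instance is $R(T)=\sum_{a:\Delta_a>0}\mathbb{E}[N^T_a]\,\Delta_a$ — multiplying by the original gap $\Delta_a$, not by $\Delta_{a,\epsilon}$ — so plugging in the count bound together with $\Delta_{a,\epsilon}=\tfrac{e^\epsilon-1}{e^\epsilon+1}\Delta_a$ yields
\begin{align}
	R(T)\;\le\;\sum_{a:\Delta_a>0}\Big[\frac{8}{\Delta_a}\Big(\frac{e^\epsilon+1}{e^\epsilon-1}\Big)^2\log T+\Big(1+\frac{\pi^2}{3}\Big)\Delta_a\Big], \nonumber
\end{align}
which is the stated distribution-dependent bound; since $\tfrac{e^\epsilon+1}{e^\epsilon-1}=\Theta(1/\epsilon)$ for small $\epsilon$, this is $O\big(\sum_a[\log T/(\epsilon^2\Delta_a)+\Delta_a]\big)$. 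For the distribution-free bound I would use the standard gap-thresholding argument: fix $\Delta_0>0$, bound the contribution of arms with $\Delta_a\le\Delta_0$ by $T\Delta_0$ and the contribution of arms with $\Delta_a>\Delta_0$ by $\tfrac{8n}{\Delta_0}\big(\tfrac{e^\epsilon+1}{e^\epsilon-1}\big)^2\log T+O(n)$ via the above, then optimize $\Delta_0\asymp\tfrac1\epsilon\sqrt{n\log T/T}$ to obtain $O(\epsilon^{-1}\sqrt{nT\log T})$, the additive $O(n)$ being absorbed because $n\le T$.

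The argument has no deep difficulty; the one point requiring care is keeping the two gap scales straight — the concentration/confidence-interval part of the UCB1 proof operates with the compressed gaps $\Delta_{a,\epsilon}$, whereas turning a pull count into regret must use the original gaps $\Delta_a$ — and it is exactly this mismatch that produces the factor $\big(\tfrac{e^\epsilon+1}{e^\epsilon-1}\big)^2$, the square of the gap-compression ratio of CTB, as the price of privacy, matching the $(e^\epsilon-e^{-\epsilon})^{-2}$ form of the lower bound in Theorem~\ref{Thm:LB} up to a constant.
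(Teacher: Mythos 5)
Your proposal is correct and follows essentially the same route as the paper's proof: view the CTB outputs as a Bernoulli bandit instance with compressed gaps $\Delta_{a,\epsilon}=\frac{e^\epsilon-1}{e^\epsilon+1}\Delta_a$, bound $\mathbb{E}[N^T_a]\le \frac{8\log T}{\Delta_{a,\epsilon}^2}+1+\frac{\pi^2}{3}$, convert counts to regret with the original gaps $\Delta_a$, and finish the distribution-free bound by the same gap-thresholding with $\alpha\asymp\frac{1}{\epsilon}\sqrt{n\log T/T}$. The only difference is presentational: you invoke the UCB1 pull-count bound of Auer et al. as a black box, whereas the paper re-derives it inline via Chernoff--Hoeffding and a union bound, arriving at the identical constants.
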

	\textbf{Remark.}
	i) Compared to non-private UCB algorithms using the same confidence bounds \cite{FiniteAnalysis2002}, the regret of LDP-UCB-L is increased by a $(\frac{e^\epsilon+1}{e^\epsilon-1})^2$ factor, which can be viewed as the cost for preserving privacy. When $\epsilon$ approaches infinity, this factor approaches one, and the regret approaches that of the non-private version. ii) According to Theorem~\ref{Thm:LB}, the distribution-dependent regret of LDP-UCB-B is optimal (up to a constant factor). iii) In \cite[Theorem~1]{DPBanditLowerBound2019}, a distribution-free lower bound $(\Omega(\epsilon^{-1}\sqrt{nT}))$ was given, and thus the distribution-free regret of LDP-UCB-B is optimal up to a $\sqrt{\log{T}}$ factor. iv) The $\epsilon$-term $(\frac{e^\epsilon + 1}{e^\epsilon -1})^2$ of LDP-UCB-B is always smaller than $(1+\frac{4}{\epsilon})^2$, that of LDP-UCB-L. When $\epsilon$ increases, the difference becomes smaller, and when $\epsilon$ approaches infinity, they all converge to one. Later, the numerical results will also indicate that LDP-UCB-B's empirical performance tends to be better than that of LDP-UCB-L and the difference tends to be smaller as $\epsilon$ increases. 

\subsection{Mechanisms for MAB with unbounded reward supports}
	In practice, the rewards may not have bounded supports, and the mechanisms studied in the last subsection do not work in this situation. For the Laplace mechanism, adding Laplace$(s/\epsilon)$ noise to the rewards does not provide $\epsilon$-DP if the difference between two rewards is larger than $s$.
	For the Bernoulli mechanism, when $r < 0$ or $r > 1$, the value $(re^\epsilon+1-r)/(e^\epsilon + 1)$ is outside of $[0,1]$, making the Bernoulli mechanism ill-defined. 
	
	To deal with unbounded rewards, we first map the rewards with a Sigmoid function, for which the outputs are guaranteed to be in $[0,1]$. Sigmoid function is defined as $s(r) := (1 + e^{-r})^{-1}$ for any $r \in \mathbb{R}$, which happens to guarantee that the gap between the expected mapped rewards of any two arms $a$ and $b$ is lower bounded by $\Omega(|\mu_a-\mu_b|)$ for bandit instances with i.i.d. sub-Gaussian noises. This property is stated in Lemma~\ref{Lm:Sigmoid}. Other logistic functions may also have similar properties, but in this paper we focus on Sigmoid.
	
	\begin{restatable}{lemma}{RestateSigmoid}\label{Lm:Sigmoid}
		Let $0 \leq \mu \leq \lambda \leq 1$ and $\mathcal{N}$ be a sub-Gaussian distribution with mean zero and variance one. For $s(r) = (1+e^{-r})^{-1}$, $X = \lambda + Z_1$ and $Y = \mu + Z_2$, where $Z_1$ and $Z_2$ are i.i.d. in $\mathcal{N}$, we have $\mathbb{E}[s(X) - s(Y)] \geq c_s(\lambda - \mu)$, where $c_s > 0$ is a universal constant.
	\end{restatable}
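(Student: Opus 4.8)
The plan is to write the expected gap as an integral of the derivative of the shifted Sigmoid and then bound that derivative below by a universal constant on the relevant range. Set $s'(r) = \frac{e^{-r}}{(1+e^{-r})^2} = s(r)\bigl(1-s(r)\bigr) > 0$, and note that since $X = \lambda + Z_1$ and $Y = \mu + Z_2$ with $Z_1,Z_2$ identically distributed (call the common law $Z$), linearity gives $\mathbb{E}[s(X)-s(Y)] = \mathbb{E}[s(\lambda+Z)] - \mathbb{E}[s(\mu+Z)]$. Writing $s(\lambda+z) - s(\mu+z) = \int_\mu^\lambda s'(t+z)\,\mathrm{d}t$ and applying Tonelli's theorem (the integrand is nonnegative), I obtain
\[
\mathbb{E}[s(X)-s(Y)] = \int_\mu^\lambda \mathbb{E}\bigl[s'(t+Z)\bigr]\,\mathrm{d}t .
\]
Hence it suffices to show that $\mathbb{E}[s'(t+Z)] \geq c_s$ for every $t \in [0,1]$, since integrating this over $[\mu,\lambda]$ yields $\mathbb{E}[s(X)-s(Y)] \geq c_s(\lambda-\mu)$. (One could instead invoke the mean value theorem on $\phi(t) := \mathbb{E}[s(t+Z)]$ after justifying $\phi'(t) = \mathbb{E}[s'(t+Z)]$ by dominated convergence, using $0 < s' \leq 1/4$; the Tonelli route avoids differentiating under the expectation.)

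For the uniform lower bound on $\mathbb{E}[s'(t+Z)]$ I would first record two elementary facts about $s'$: it is even, i.e. $s'(-r) = s'(r)$ (clear after multiplying numerator and denominator by $e^{2r}$), and it is nonincreasing on $[0,\infty)$ (substituting $w = e^{-r}$, one checks $\frac{\mathrm{d}}{\mathrm{d}w}\frac{w}{(1+w)^2} = \frac{1-w}{(1+w)^3}\geq 0$ for $w\in[0,1]$, so $s'(r)$ decreases as $r$ grows from $0$). Consequently, on the event $\{|Z|\leq K\}$ we have $|t+Z|\leq 1+K$ for all $t\in[0,1]$, hence $s'(t+Z) \geq s'(1+K)$, and therefore
\[
\mathbb{E}\bigl[s'(t+Z)\bigr] \geq s'(1+K)\,\mathbb{P}\{|Z|\leq K\}.
\]
Since $\mathcal{N}$ has mean zero and variance one, Chebyshev's inequality gives $\mathbb{P}\{|Z|>K\}\leq 1/K^2$; taking $K = \sqrt{2}$ yields $\mathbb{P}\{|Z|\leq K\}\geq 1/2$. (Only the second-moment bound is used here — sub-Gaussianity of $\mathcal{N}$ is not actually needed for this lemma.) Thus $\mathbb{E}[s'(t+Z)] \geq \tfrac{1}{2}\,s'(1+\sqrt{2}) =: c_s > 0$ for all $t\in[0,1]$, which completes the argument.

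The only points requiring care are the interchange of expectation and integration (immediate from Tonelli, as $s'\geq 0$) and the fact that the lower bound on $\mathbb{E}[s'(t+Z)]$ must not degenerate as $\mathcal{N}$ ranges over all mean-zero, variance-one laws — this is exactly what the truncation argument secures, since $s'$ stays bounded away from zero on any fixed bounded interval and a fixed fraction of the mass of $Z$ lands in such an interval purely because of the variance constraint. There is no real obstacle beyond this bookkeeping; the resulting constant ($c_s = \tfrac{1}{2}\,s'(1+\sqrt{2})\approx 0.037$) is not optimized but is a genuine universal constant, independent of $\mu$, $\lambda$, and the particular distribution $\mathcal{N}$.
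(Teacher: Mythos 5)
Your proof is correct, but it follows a genuinely different route from the paper's. The paper manipulates the difference directly: it writes $s(x+\lambda)-s(x+\mu) = \frac{e^{-x}(e^{-\mu}-e^{-\lambda})}{(1+e^{-x-\lambda})(1+e^{-x-\mu})}$, drops the shifts in the denominator (using $\lambda,\mu\geq 0$), bounds $e^{-\mu}-e^{-\lambda}\geq e^{-1}(\lambda-\mu)$ by convexity of $e^{-x}$, and then lower-bounds $\mathbb{E}\big[e^{-Z}/(1+e^{-Z})^2\big]$ by a truncation argument based on the sub-Gaussian tail bound $\mathbb{P}\{|Z|\leq v\}\geq 1-2e^{-v^2/2}$, numerically optimizing the truncation level to obtain the constant $0.0765\,e^{-1}$. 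You instead use the exact representation $s(\lambda+z)-s(\mu+z)=\int_\mu^\lambda s'(t+z)\,\mathrm{d}t$ together with Tonelli, reducing the lemma to a uniform lower bound on $\mathbb{E}[s'(t+Z)]$ over $t\in[0,1]$, which you get from the evenness and monotonicity of $s'$ plus Chebyshev's inequality. The two truncation steps are morally the same (a fixed fraction of the mass of $Z$ lands in a bounded window on which $s'$ is bounded away from zero), but your route buys two things: it avoids the convexity step that costs the factor $e^{-1}$ in the paper's constant, and, as you correctly observe, it only uses the second-moment constraint, so sub-Gaussianity is not actually needed for this lemma --- a mild but genuine generalization. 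Your constant $c_s=\tfrac{1}{2}\,s'(1+\sqrt{2})\approx 0.037$ is in fact slightly larger than the paper's $0.0765\,e^{-1}\approx 0.028$; the paper's heavier tail machinery pays off only in its specialization to Gaussian noise, where numerical integration gives the improved constant $0.2066\,e^{-1}$.
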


	To make the above lemma hold, in this subsection, we make further assumptions on the bandit model. Assume that mean rewards, $\mu_1, \mu_2,...,\mu_n$, are bounded and have been rescaled to $[0,1]$. Also, for each arm $a$ and time $t$, we assume that the reward of the $t$-th pull of arm $a$ is $\mu_a + Z_a^t$, where $(Z_a^t,a\in[n],t\in\mathbb{Z}^+)$ are i.i.d. sub-Gaussian with mean zero and variance at most one.

	After the Sigmoid mapping, the new mechanism maps the Sigmoid values to CTL or CTB. We name these two new mechanism as CTL-S (CTL with Sigmoid preprocessing) and CTB-S (CTB with Sigmoid preprocessing). They are described in Curators~\ref{Curator:CTL-S} and \ref{Curator:CTB-S}, respectively, and their theoretical guarantees are stated in Lemmas~\ref{Lm:CTL-S} and \ref{Lm:CTB-S}. With these two new mechanisms, we develop new LDP UCB algorithms LDP-UCB-LS (LDP-UCB-L with Sigmoid preprocessing) and LDP-UCB-BS (LDP-UCB-B with Sigmoid preprocessing) for bandits with unbounded reward supports, whose theoretical guarantees are stated in Corollaries~\ref{Thm:LDP-UCB-LS} and \ref{Thm:LDP-UCB-BS}, respectively. %The proof of Lemmas~\ref{Lm:CTL-S} and Lemma~\ref{Lm:CTB-S} and Theorems~\ref{Thm:LDP-UCB-LS} and \ref{Thm:LDP-UCB-BS} are relegated to the supplementary material. 

	\makeatletter
	\renewcommand{\ALG@name}{Curator}
	\makeatother
	
	\begin{algorithm}[h]
		\caption{Convert-to-Laplace-Sigmoid$(\epsilon)$ (CTL-S)}\label{Curator:CTL-S}
		\textbf{On receiving} a reward $r$ from the user:
		\begin{algorithmic}
			\State \Return $M_{LS}(r) = (1+e^{-r})^{-1} + L$, where $L\sim$ Laplace$(1/\epsilon)$ distribution. 
		\end{algorithmic}
	\end{algorithm}

	\begin{algorithm}[h]
		\caption{Convert-to-Bernoulli-Sigmoid$(\epsilon)$ (CTB-S)}\label{Curator:CTB-S}
		\textbf{On receiving} a reward $r$ from the user:
		\begin{algorithmic}
			\State \Return $M_{BS}(r) = $ an independent sample of Bernoulli$(\frac{s(r)e^\epsilon + 1 - s(r)}{1 + e^\epsilon})$, where $s(r) = (1 +e^{-r})^{-1}$;
		\end{algorithmic}
	\end{algorithm}

	\begin{restatable}%[Theoretical guarantee of CTL-S]
	{lemma}{RestateCTLS}\label{Lm:CTL-S}
		Curator CTL-S (i.e., $M_{LS}$) is $\epsilon$-DP. For two arms $a$ and $b$ with mean rewards $\mu_a \geq \mu_b$, the difference between the expected responses of CTL-S$(\epsilon)$ on arms $a$ and $b$ is at least $c_s(\mu_{a}-\mu_{a})$, where $c_s > 0$ is a universal constant.
	\end{restatable}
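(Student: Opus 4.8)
The plan is to prove the two assertions separately, each by reduction to a result already available in the paper. For the privacy claim, I would first note that the Sigmoid map $s(r) = (1+e^{-r})^{-1}$ is a deterministic function carrying every real reward into $(0,1) \subseteq [0,1]$, and that $M_{LS}(r) = M_L(s(r))$ is precisely the composition of this deterministic map with the Laplace curator $M_L$ of Curator~\ref{Curator:CTL}. Since Lemma~\ref{Lm:CTL} states that $M_L$ is $\epsilon$-DP on $[0,1]$ — meaning $\mathbb{P}\{M_L(x)\in E\} \leq e^\epsilon \mathbb{P}\{M_L(x')\in E\}$ for all $x,x'\in[0,1]$ and measurable $E$ — it follows at once that, for any rewards $r,r'\in\mathbb{R}$ and any measurable $E$,
\[
\mathbb{P}\{M_{LS}(r)\in E\} = \mathbb{P}\{M_L(s(r))\in E\} \leq e^\epsilon\,\mathbb{P}\{M_L(s(r'))\in E\} = e^\epsilon\,\mathbb{P}\{M_{LS}(r')\in E\},
\]
and symmetrically, so $M_{LS}$ is $\epsilon$-DP (on the unbounded reward domain). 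If one wants a self-contained argument, the same bound follows directly from the fact that two Laplace$(1/\epsilon)$ densities shifted by $|s(r)-s(r')| < 1$ have pointwise ratio at most $e^\epsilon$.

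For the expected-gap claim, recall that under the model assumptions of this subsection a pull of arm $a$ returns $\mu_a + Z_a$ with $Z_a$ zero-mean, variance-at-most-one sub-Gaussian, so the response of CTL-S on arm $a$ is $s(\mu_a + Z_a) + L$ with $L\sim$ Laplace$(1/\epsilon)$ independent of $Z_a$. Because $\mathbb{E}[L]=0$, the expected response of CTL-S on arm $a$ equals $\mathbb{E}[s(\mu_a + Z_a)]$, and likewise for arm $b$. Hence, for $\mu_a \geq \mu_b$ with both in $[0,1]$ (after rescaling), the difference of expected responses is $\mathbb{E}[s(\mu_a + Z_a)] - \mathbb{E}[s(\mu_b + Z_b)]$, which is exactly the quantity bounded in Lemma~\ref{Lm:Sigmoid} with $\lambda = \mu_a$ and $\mu = \mu_b$; invoking that lemma yields the lower bound $c_s(\mu_a - \mu_b)$ with the same universal constant $c_s$. (I read the ``$c_s(\mu_a-\mu_a)$'' in the statement as a typo for $c_s(\mu_a-\mu_b)$.)

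The only nontrivial ingredient is Lemma~\ref{Lm:Sigmoid}, which is a standalone fact about the Sigmoid of a sub-Gaussian perturbation proved elsewhere in the paper; granting it, the present lemma is essentially immediate, so I do not expect a genuine obstacle — the only care needed is in matching notation ($Z_1,Z_2$ versus the per-arm noises $Z_a,Z_b$) and in observing that the mean-zero Laplace noise leaves the expectation unchanged, so that the gap is carried entirely by the Sigmoid-transformed mean rewards.
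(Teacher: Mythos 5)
Your proposal is correct and follows essentially the same route as the paper: the privacy claim is obtained by composing the deterministic Sigmoid map into $[0,1]$ with the $\epsilon$-DP Laplace curator of Lemma~\ref{Lm:CTL}, and the expected-gap claim reduces to Lemma~\ref{Lm:Sigmoid} after noting that the zero-mean Laplace noise does not shift expectations. You also correctly identify that the stated bound $c_s(\mu_a-\mu_a)$ is a typo for $c_s(\mu_a-\mu_b)$.
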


	\begin{restatable}%[Theoretical guarantee of CTB-S]
		{lemma}{RestateCTBS}\label{Lm:CTB-S}
		Curator CTB-S (i.e., $M_{BS}$) is $\epsilon$-DP. For two arms $a$ and $b$ with mean rewards $\mu_a \geq \mu_b$, the difference between the expected responses of CTB-S$(\epsilon)$ on arms $a$ and $b$ is at least $c_s(\mu_{a,\epsilon}-\mu_{a,\epsilon})$, where $c_s > 0$ is a universal constant.
	\end{restatable}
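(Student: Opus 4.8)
The statement has two parts: that the curator $M_{BS}$ is $\epsilon$-DP, and that for $\mu_a\ge\mu_b$ the gap between the expected responses on arms $a$ and $b$ is at least $c_s(\mu_{a,\epsilon}-\mu_{b,\epsilon})$ (I read the printed bound $c_s(\mu_{a,\epsilon}-\mu_{a,\epsilon})$ as a typo for this). For the privacy part, the plan is to reduce to Lemma~\ref{Lm:CTB}: the curator factors as $M_{BS}=M_B\circ s$, where $s(r)=(1+e^{-r})^{-1}$ is a \emph{deterministic} map with $s(\mathbb{R})\subseteq(0,1)\subseteq[0,1]$. Since $M_B$ is $\epsilon$-DP on $[0,1]$, for any $r,r'\in\mathbb{R}$ (hence $s(r),s(r')\in[0,1]$) and any measurable $E\subseteq\{0,1\}$,
\[
	\mathbb{P}\{M_{BS}(r)\in E\}=\mathbb{P}\{M_B(s(r))\in E\}\le e^\epsilon\,\mathbb{P}\{M_B(s(r'))\in E\}=e^\epsilon\,\mathbb{P}\{M_{BS}(r')\in E\},
\]
and symmetrically, so $M_{BS}$ is $\epsilon$-DP. (Equivalently, one checks directly that the Bernoulli parameter $\tfrac{s(r)e^\epsilon+1-s(r)}{1+e^\epsilon}$ always lies in $[\tfrac{1}{1+e^\epsilon},\tfrac{e^\epsilon}{1+e^\epsilon}]$ and that the likelihood ratio of two Bernoulli laws with parameters in this interval is at most $e^\epsilon$ on either outcome.)

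For the gap, the plan is to pass the expectation through the curator and then invoke Lemma~\ref{Lm:Sigmoid}. Write $R_a=\mu_a+Z_a$, $R_b=\mu_b+Z_b$ with $Z_a,Z_b$ i.i.d., zero-mean, sub-Gaussian of variance at most one, and set $\nu_a:=\mathbb{E}[s(R_a)]$, $\nu_b:=\mathbb{E}[s(R_b)]$. By linearity of expectation,
\[
	\mathbb{E}[M_{BS}(R_a)]=\frac{\nu_a e^\epsilon+1-\nu_a}{1+e^\epsilon},\qquad \mathbb{E}[M_{BS}(R_a)]-\mathbb{E}[M_{BS}(R_b)]=(\nu_a-\nu_b)\cdot\frac{e^\epsilon-1}{e^\epsilon+1}.
\]
Lemma~\ref{Lm:Sigmoid}, applied with $\lambda=\mu_a$ and $\mu=\mu_b$, gives $\nu_a-\nu_b\ge c_s(\mu_a-\mu_b)$, so the gap is at least $c_s(\mu_a-\mu_b)\cdot\frac{e^\epsilon-1}{e^\epsilon+1}$. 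Finally, from $\mu_{a,\epsilon}=\tfrac12+(2\mu_a-1)\tfrac{e^\epsilon-1}{2(e^\epsilon+1)}$ one gets $\mu_{a,\epsilon}-\mu_{b,\epsilon}=(\mu_a-\mu_b)\tfrac{e^\epsilon-1}{e^\epsilon+1}$, turning the bound into $c_s(\mu_{a,\epsilon}-\mu_{b,\epsilon})$ with $c_s$ the universal constant of Lemma~\ref{Lm:Sigmoid}.

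The only genuinely delicate point I anticipate is the invocation of Lemma~\ref{Lm:Sigmoid}, which is stated for noise of variance exactly one while here the variance is only bounded by one; this is handled by noting that the constant $c_s$ there depends only on the sub-Gaussian parameter, not on the exact variance, so the lemma applies verbatim to $Z_a,Z_b$ (using also $0\le\mu_b\le\mu_a\le1$, which holds by the standing assumption that mean rewards are rescaled to $[0,1]$). Everything else is routine, and the argument parallels the proof of Lemma~\ref{Lm:CTL-S}, with the extra factor $\frac{e^\epsilon-1}{e^\epsilon+1}$ being precisely the Bernoulli-conversion factor already present in Lemma~\ref{Lm:CTB}.
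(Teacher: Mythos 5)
Your proposal is correct and follows essentially the same route as the paper: privacy is inherited from the $\epsilon$-DP of $M_B$ since the Sigmoid output lies in $[0,1]$, and the gap bound combines Lemma~\ref{Lm:Sigmoid} with the Bernoulli-conversion factor $\frac{e^\epsilon-1}{e^\epsilon+1}$ from Lemma~\ref{Lm:CTB}, exactly as in the paper's argument (which likewise reads the statement as $c_s(\mu_{a,\epsilon}-\mu_{b,\epsilon})$). Your explicit treatment of the composition $M_{BS}=M_B\circ s$ and of the variance-at-most-one issue only spells out steps the paper leaves implicit.
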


	\begin{restatable}
		{corollary}{RestateLDPUCBLS}\label{Thm:LDP-UCB-LS}
		Replacing CTL in LDP-UCB-L by CTL-S, we get LDP-UCB-LS. LDP-UCB-LS is $\epsilon$-LDP. Its distribution-dependent regret is at most
		\begin{align}
			\frac{1}{c_s^2}\sum_{a:\Delta_a>0}\Big[\frac{8(1 + 4/\epsilon)^2\log{T}}{\Delta_a} + \Big(1 + \frac{2\pi^2}{3}\Big)\Delta_a\Big] = 	O\Big(\sum_{a:\Delta_a>0}\Big[\frac{\log{T}}{\epsilon^2\Delta_a} + \Delta_a \Big]\Big), \nonumber
		\end{align}
		where $c_s>0$ is a universal constant. Its distribution-free regret is at most $O(\epsilon^{-1}\sqrt{nT\log{T}})$.
	\end{restatable}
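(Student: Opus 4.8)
The plan is to deduce the corollary from Theorem~\ref{Thm:LDP-UCB-L} by recognizing LDP-UCB-LS as LDP-UCB-L run on a ``Sigmoid-transformed'' copy of the bandit instance, and then translating the regret bound from the transformed gaps back to the original gaps via Lemma~\ref{Lm:Sigmoid}. First I would dispatch the privacy claim: CTL-S is $\epsilon$-DP by Lemma~\ref{Lm:CTL-S}, and the agent in LDP-UCB-LS is exactly the agent of LDP-UCB-L (Agent~\ref{Agent-L}), which selects $A^{t+1}$ using only $(A^s, M_{LS}(R^s) : 1\le s\le t)$ and its own internal randomness; hence Definition~\ref{Def:BM-LDP} is met with $M = M_{LS}$, so LDP-UCB-LS is $\epsilon$-LDP.

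Next I would set up the transformed instance. Let arm $a$ emit, on its $t$-th pull, the reward $\tilde R_a^t := s(R_a^t) = s(\mu_a + Z_a^t)\in(0,1)$, with mean $\nu_a := \mathbb{E}[s(\mu_a + Z_a^1)]$. Since CTL-S first applies $s$ and then adds the same Laplace$(1/\epsilon)$ noise that CTL adds, the execution of LDP-UCB-LS on the original instance is identical in distribution to the execution of LDP-UCB-L on this transformed instance, whose rewards lie in $[0,1]$ so that Theorem~\ref{Thm:LDP-UCB-L} applies verbatim (including the Hoeffding term $\sqrt{2\log t / N_a^t}$ and the forced-exploration step in Line~5 of Agent~\ref{Agent-L}). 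Applying Lemma~\ref{Lm:Sigmoid} with $\lambda = \mu_a$, $\mu = \mu_b$ for each pair with $\mu_a\ge\mu_b$ (legitimate because the $\mu$'s are rescaled to $[0,1]$ and the noises are zero-mean, variance-$\le1$ sub-Gaussian) gives $\nu_a - \nu_b \ge c_s(\mu_a-\mu_b)\ge 0$; in particular $\nu$ is monotone in $\mu$, so an optimal arm $a^\star$ of the original instance is optimal in the transformed instance, and the transformed gap $\Delta_a' := \nu_{a^\star}-\nu_a$ satisfies $\Delta_a'\ge c_s\Delta_a>0$ for every suboptimal $a$.

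Then I would run the numbers. The proof of Theorem~\ref{Thm:LDP-UCB-L} bounds the expected pull count of a suboptimal arm $a$ (here, in the transformed instance) by $\mathbb{E}[N_a^T]\le 8(1+4/\epsilon)^2\log T/(\Delta_a')^2 + 1 + 2\pi^2/3$. Multiplying by the true gap $\Delta_a$ and using $\Delta_a'\ge c_s\Delta_a$ yields $\mathbb{E}[N_a^T]\,\Delta_a \le 8(1+4/\epsilon)^2\log T/(c_s^2\Delta_a) + (1 + 2\pi^2/3)\Delta_a$; summing over suboptimal $a$ and using $1/c_s^2\ge1$ (note $c_s\le 1/4$ since $|s'|\le1/4$) gives the stated distribution-dependent bound. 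For the distribution-free bound I would use the standard threshold argument already employed in the proof of Theorem~\ref{Thm:LDP-UCB-L}: split suboptimal arms at a level $\delta$, bound the contribution of arms with $\Delta_a\le\delta$ by $\delta\sum_a\mathbb{E}[N_a^T]\le\delta T$ and that of arms with $\Delta_a>\delta$ by $O(n\epsilon^{-2}c_s^{-2}\log T/\delta)$, and optimize $\delta$, which produces $O(\epsilon^{-1}\sqrt{nT\log T})$ (the extra $c_s^{-1}$ being a universal constant absorbed by $O(\cdot)$).

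The main obstacle is the bookkeeping across the two scales of gaps: the UCB machinery controls the pull counts through the transformed gaps $\Delta_a'$, whereas the regret actually incurred is charged at the original gaps $\Delta_a$. The bridge is exactly Lemma~\ref{Lm:Sigmoid}'s lower bound $\Delta_a'\ge c_s\Delta_a$ together with the preservation of the optimal arm under $s$; once these are established, the remainder is a black-box reuse of the proof of Theorem~\ref{Thm:LDP-UCB-L}, with no new concentration or privacy argument required.
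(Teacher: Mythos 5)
Your proposal is correct and follows exactly the route the paper intends (the paper gives no separate proof for this corollary, treating it as the direct combination of Theorem~\ref{Thm:LDP-UCB-L} with Lemmas~\ref{Lm:Sigmoid} and~\ref{Lm:CTL-S}): view LDP-UCB-LS as LDP-UCB-L run on the Sigmoid-transformed instance with rewards in $[0,1]$, use $\Delta_a' \ge c_s\Delta_a$ and preservation of the optimal arm to translate the pull-count bound back to the original gaps, and absorb $c_s$ into the constants for the distribution-free bound. Your bookkeeping (including the observation that your bound is slightly tighter than the stated one and that $1/c_s^2\ge 1$ recovers the stated form) is sound.
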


	\begin{restatable}
		{corollary}{RestateLDPUCBBS}\label{Thm:LDP-UCB-BS}
		Replacing CTB in LDP-UCB-B by CTB-S, we get LDP-UCB-BS. LDP-UCB-BS is $\epsilon$-LDP. Its distribution-dependent regret is at most
		\begin{align}
			\frac{1}{c_s^2}\sum_{a:\Delta_a>0}\Big[\frac{8}{\Delta_a}\Big(\frac{e^\epsilon+1}{e^\epsilon-1}\Big)^2\log{T} + \Big(1 + \frac{\pi^2}{3}\Big)\Delta_a\Big] = 	O\Big(\sum_{a:\Delta_a>0}\Big[\frac{\log{T}}{\epsilon^2\Delta_a} + \Delta_a \Big]\Big), \nonumber
		\end{align}
		where $c_s > 0$ is a universal constant. Its distribution-free regret is at most $O(\epsilon^{-1}\sqrt{nT\log{T}})$.
	\end{restatable}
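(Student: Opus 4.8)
The plan is to obtain Corollary~\ref{Thm:LDP-UCB-BS} with essentially no new work, by combining the already-established analysis of LDP-UCB-B (Theorem~\ref{Thm:LDP-UCB-B}) with the gap-distortion bound of Lemma~\ref{Lm:CTB-S}. The only genuinely new ingredient is the observation that CTB-S turns the (possibly unbounded) bandit into an ordinary stochastic Bernoulli bandit whose gaps are only a constant factor smaller than the original ones.

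\textbf{Privacy.} Curator CTB-S is the composition $M_{BS}=M_B\circ s$ of the deterministic Sigmoid map $s(r)=(1+e^{-r})^{-1}\in(0,1)$ with the curator $M_B$ of CTB, which by Lemma~\ref{Lm:CTB} is $\epsilon$-DP on $[0,1]$. Since $s$ sends every real reward into $[0,1]$ and pre-composing an $\epsilon$-DP mechanism with a fixed map does not weaken the guarantee, $M_{BS}$ is $\epsilon$-DP on all of $\mathbb{R}$ (this is the first claim of Lemma~\ref{Lm:CTB-S}). The agent of LDP-UCB-BS acts only on past actions and the private responses $M_{BS}(R^s)$, so by Definition~\ref{Def:BM-LDP} the algorithm is $\epsilon$-LDP.

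\textbf{Reduction and regret bounds.} Under the model of this subsection, pulling arm $a$ at time $t$ returns $\mu_a+Z^t_a$ with $(Z^t_a)$ i.i.d.\ sub-Gaussian, so CTB-S converts arm $a$ into a Bernoulli arm whose responses are i.i.d.\ across pulls, independent across arms, with mean $\mu'_{a,\epsilon}:=\frac{1}{1+e^\epsilon}+\frac{e^\epsilon-1}{1+e^\epsilon}\,\mathbb{E}[s(\mu_a+Z^1_a)]$. Because $s$ is strictly increasing and $\frac{e^\epsilon-1}{1+e^\epsilon}>0$, the optimal arm is preserved, and Lemma~\ref{Lm:CTB-S} (via Lemma~\ref{Lm:Sigmoid}) gives $\widetilde\Delta_a:=\mu'_{a^*,\epsilon}-\mu'_{a,\epsilon}\ \ge\ c_s\,\frac{e^\epsilon-1}{e^\epsilon+1}\,\Delta_a$ for every suboptimal $a$, with $c_s\in(0,1)$ a universal constant. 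Now LDP-UCB-BS is exactly Agent~\ref{Agent-B} (UCB1 with Hoeffding index $\hat\mu^t_a+\sqrt{2\log t/N^t_a}$) run on these converted responses, so the classical analysis of \cite{FiniteAnalysis2002} used for Theorem~\ref{Thm:LDP-UCB-B} applies verbatim and yields $\mathbb{E}[N^T_a]\le 8\log T/\widetilde\Delta_a^{\,2}+1+\pi^2/3$. Charging this against the true gap $\Delta_a$, summing over suboptimal arms, and plugging in $\widetilde\Delta_a^{-2}\le c_s^{-2}\big(\frac{e^\epsilon+1}{e^\epsilon-1}\big)^2\Delta_a^{-2}$ gives $R(T)\le\sum_{a:\Delta_a>0}\big[\frac{8}{c_s^2}\big(\frac{e^\epsilon+1}{e^\epsilon-1}\big)^2\frac{\log T}{\Delta_a}+(1+\frac{\pi^2}{3})\Delta_a\big]$; since $c_s\le1$ one may harmlessly factor $c_s^{-2}\ge1$ out of the entire sum, which is the stated bound. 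The distribution-free bound follows from the usual threshold trick: arms with $\Delta_a\le\delta$ contribute at most $\delta T$, arms with $\Delta_a>\delta$ contribute at most $\frac{8n\log T}{c_s^2\delta}\big(\frac{e^\epsilon+1}{e^\epsilon-1}\big)^2+O(n)$, and optimizing $\delta\asymp\frac{e^\epsilon+1}{e^\epsilon-1}\sqrt{n\log T/T}$ gives $O\big(\frac{e^\epsilon+1}{e^\epsilon-1}\sqrt{nT\log T}\big)=O(\epsilon^{-1}\sqrt{nT\log T})$.

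\textbf{Expected main obstacle.} The calculations above are routine; the one step that needs care is the reduction itself — checking that the CTB-S responses of a fixed arm really are i.i.d.\ (so that an off-the-shelf UCB bound may be invoked with the converted gaps) while the regret is still accounted against the original $\Delta_a$. All the genuine analytic difficulty is pushed into Lemma~\ref{Lm:Sigmoid}, which I take as given: there the obstacle is that $s$ has derivative at most $1/4$ and $\mathbb{E}[s(\mu+Z)]$ could in principle flatten as the noise scale grows, so one must show the distortion constant $c_s$ stays bounded away from $0$ uniformly over all $0\le\mu\le\lambda\le1$ and all mean-zero, variance-$\le1$ sub-Gaussian noises.
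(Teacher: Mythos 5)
Your proposal is correct and follows essentially the same route the paper intends: the paper gives no separate proof of this corollary, treating it as the analysis of Theorem~\ref{Thm:LDP-UCB-B} applied to the CTB-S-converted Bernoulli arms, whose gaps are at least $c_s\frac{e^\epsilon-1}{e^\epsilon+1}\Delta_a$ by Lemma~\ref{Lm:CTB-S}, with privacy inherited from Lemma~\ref{Lm:CTB}. Your explicit bookkeeping (i.i.d.\ converted responses, charging regret against the true $\Delta_a$, and absorbing the slack via $c_s\le 1$) matches and slightly sharpens the stated bound, and the distribution-free argument is the same threshold trick used in the paper's proof of Theorem~\ref{Thm:LDP-UCB-B}.
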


%\section{Extension to related bandit problems}
%	The procedure CTB$(a,\epsilon)$ can be viewed as a mapping from a bounded arm with mean reward $\mu_a$ to another arm with Bernoulli$(\mu_{a,\epsilon})$ rewards. Thus, for any bandit algorithm that can take Bernoulli arms as input, we can use CTB to convert it to a LDP algorithm. Here, we enumerate upper bounds for some variations of multi-armed bandit problems. For brevity, the proofs are omitted. 
%	
%	Best-$k$ arms identification. [Add a table here]
	
\section{Numerical results}
	In this section, we illustrate the numerical results for our algorithms. Due to space limitation, we only present the results for bandits with bounded supports. The results for bandits with unbounded supports can be found in the supplementary material. To the best of our knowledge, there is no previous LDP bandit algorithm in the literature except in \cite{DPBanditsCorrupt2018}. However, the algorithm in \cite{DPBanditsCorrupt2018} only works for Bernoulli rewards and can be viewed as a special case of our Bernoulli mechanism. Thus, the only LDP bandit algorithms we present are LDP-UCB-L and LDP-UCB-B. We also include the performance of the non-private UCB algorithm (i.e., $\epsilon = \infty$) as a baseline to see the cost for preserving $\epsilon$-LDP. Here, we use the UCB1 algorithm in \cite{FiniteAnalysis2002} as the baseline since our private algorithms adopt the same confidence bounds as in \cite{FiniteAnalysis2002}. The codes can be found in the supplementary material.
	
	The numerical results are illustrated in Figure~\ref{fig:bounded}. In all the experiments, we set the number of arms $n = 20$. The best arm has a mean reward $0.9$; five arms have mean rewards $0.8$; five arms have mean rewards $0.7$; five arms have mean rewards $0.6$; and four arms have mean rewards $0.5$. In Figure~\ref{fig:bounded} (a) to (d), we use Bernoulli arms, i.e., the rewards of all arms follow Bernoulli distributions. In Figure~\ref{fig:bounded} (e) (f), the rewards of arms follow different types of distributions to show that our algorithms work beyond Bernoulli arms. To be specific, arms with mean rewards $0.9$ or $0.6$ generate rewards from Bernoulli distributions; arms with mean rewards $0.8$ generate rewards from Beta$(4,1)$ distribution; arms with mean rewards $0.7$ generate rewards from $\{0.4,1\}$ uniformly at random; and arms with mean rewards $0.5$ generate rewards from $[0,1]$ uniformly at random. Each line in each figure is averaged over 50 independent trials.

	\begin{figure}[h]\centering
		\begin{subfigure}[b]{0.32\textwidth}
			\includegraphics[scale=0.25]{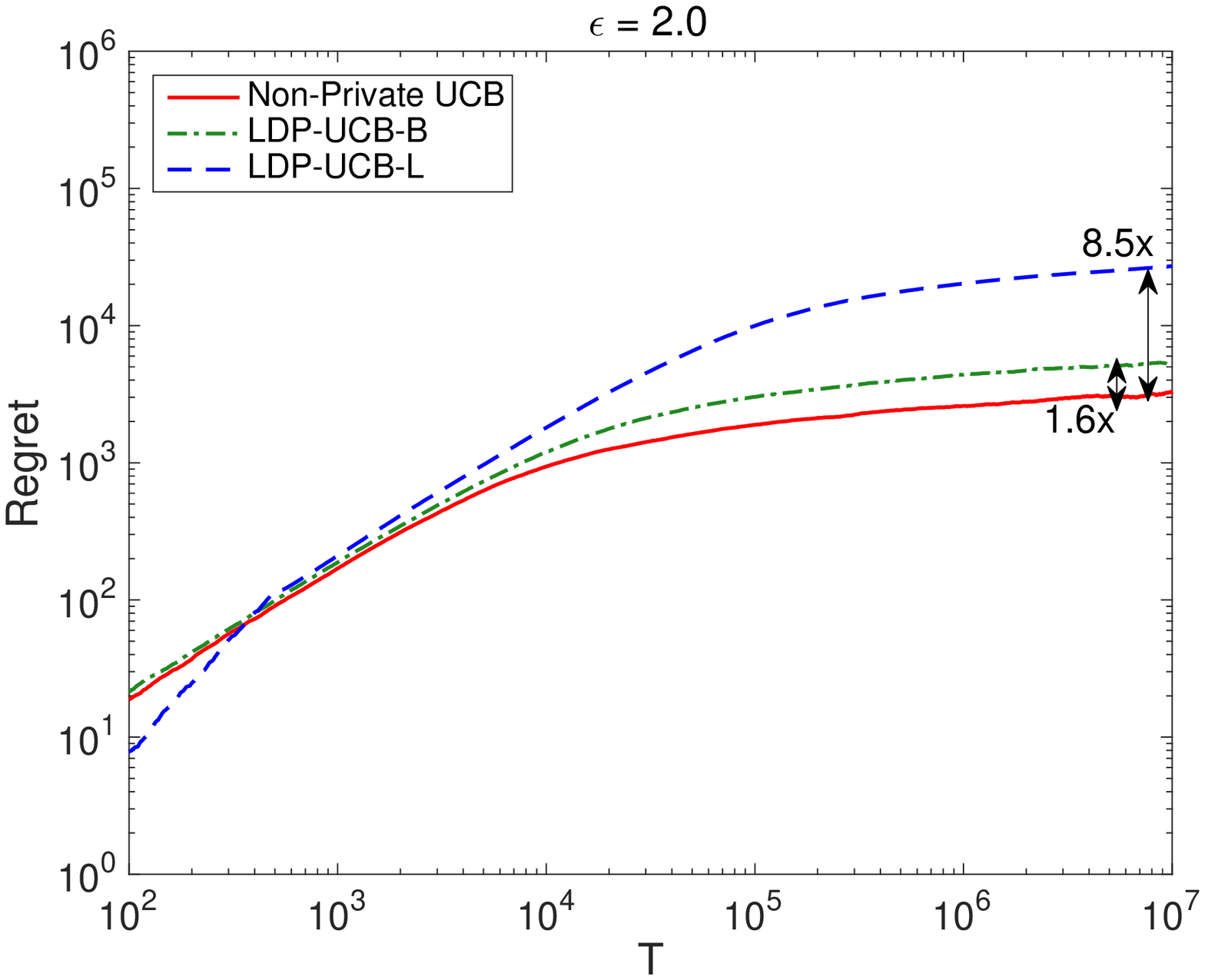}
			\caption{Bernoulli arms, $\epsilon = 2.0$.}
		\end{subfigure}\ \ 
		\begin{subfigure}[b]{0.32\textwidth}
			\includegraphics[scale=0.25]{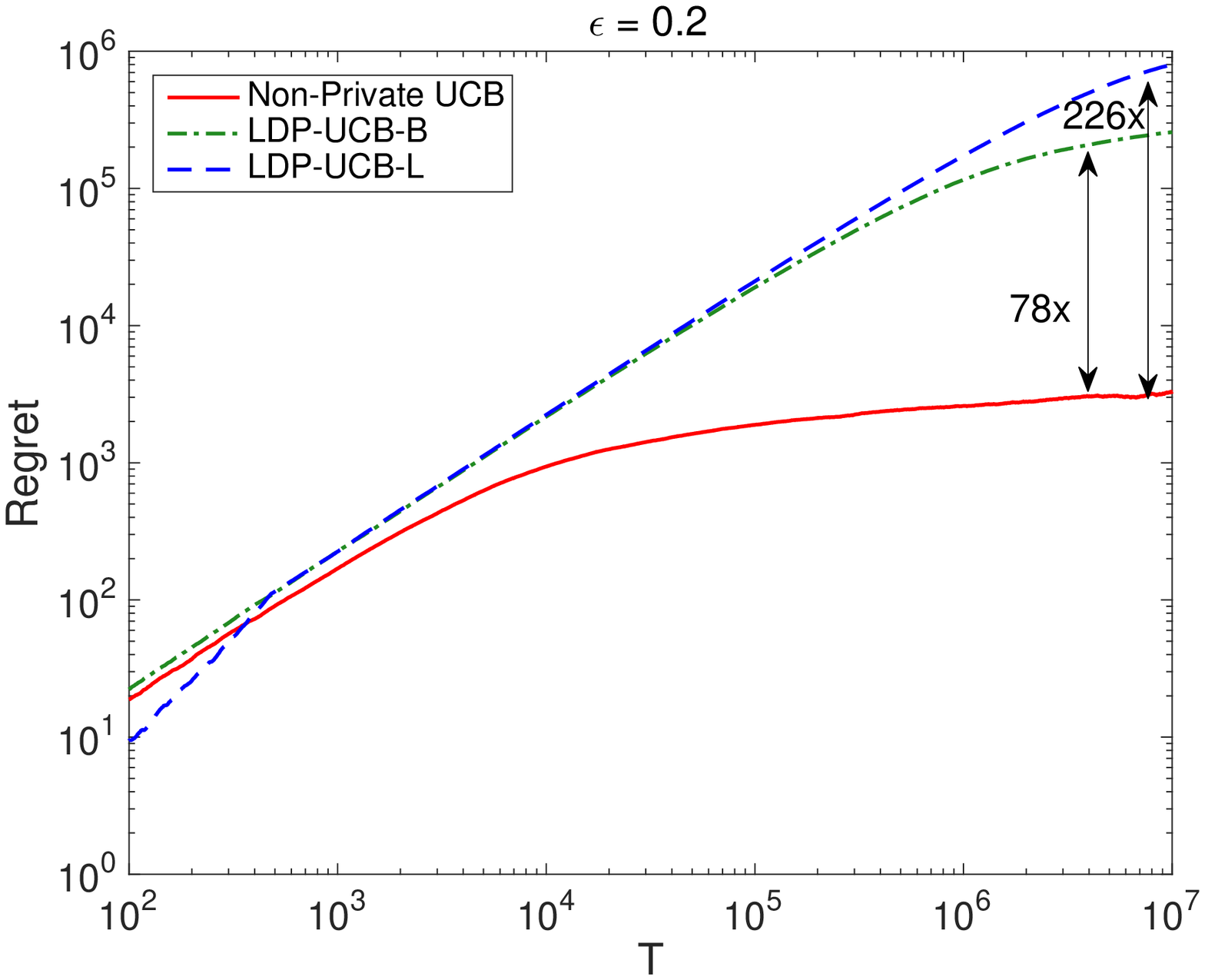}
			\caption{Bernoulli arms, $\epsilon = 0.2$.}
		\end{subfigure}\ \ 
		\begin{subfigure}[b]{0.32\textwidth}
			\includegraphics[scale=0.25]{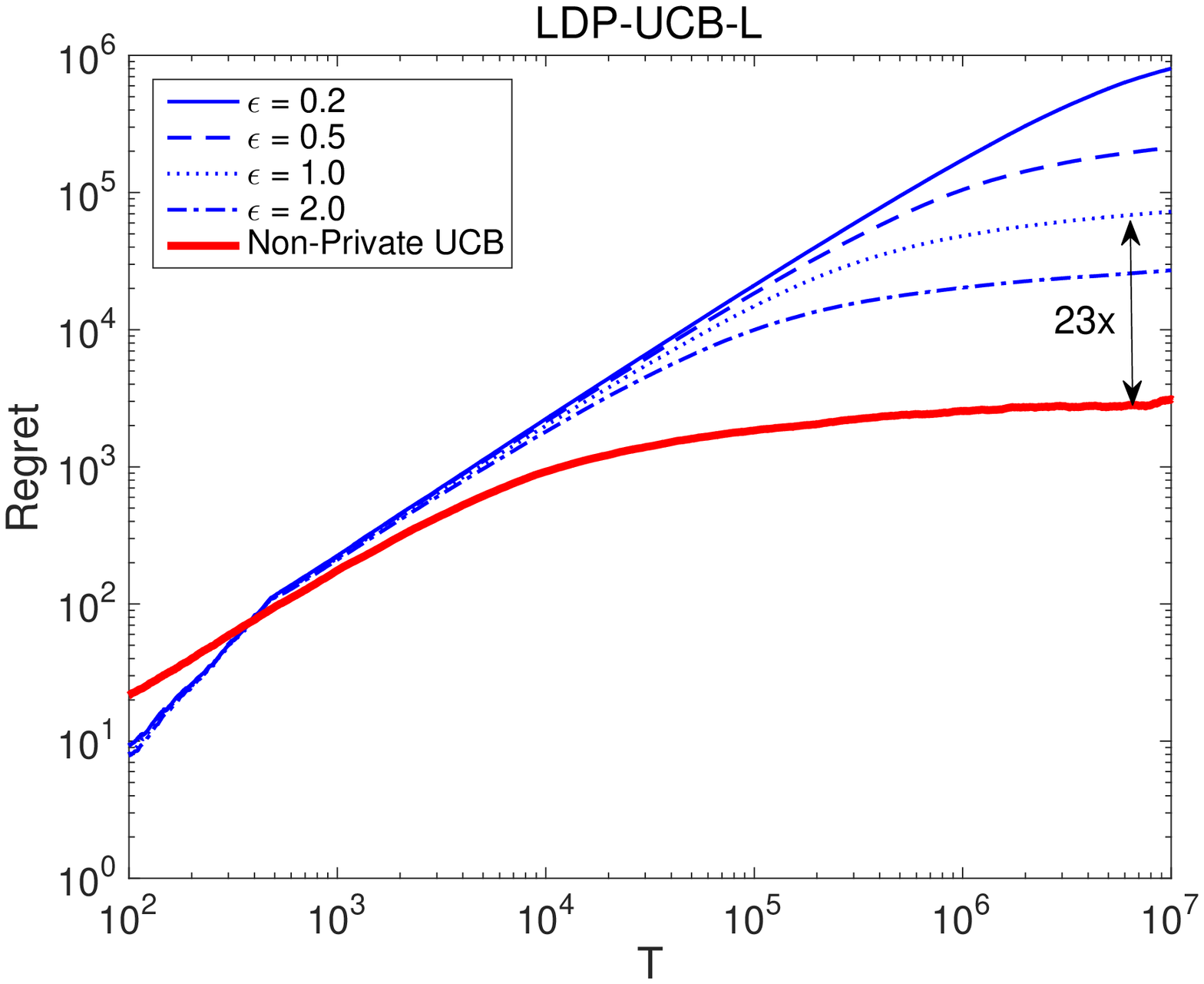}
			\caption{LDP-UCB-L, vary $\epsilon$.}
		\end{subfigure}\\
		\begin{subfigure}[b]{0.32\textwidth}
			\includegraphics[scale=0.25]{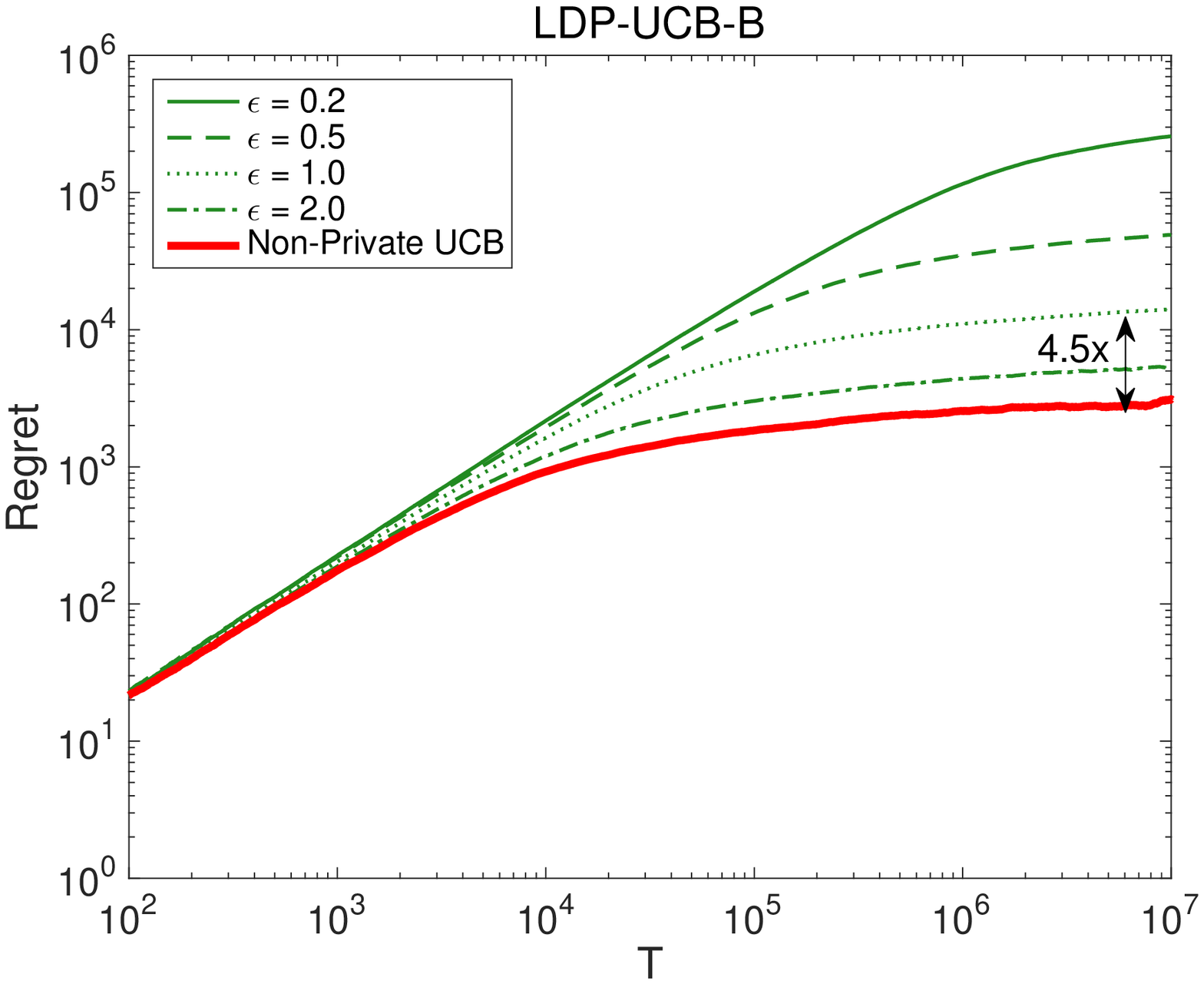}
			\caption{LDP-UCB-B, vary $\epsilon$.}
		\end{subfigure}\ \ 
		\begin{subfigure}[b]{0.32\textwidth}
			\includegraphics[scale=0.25]{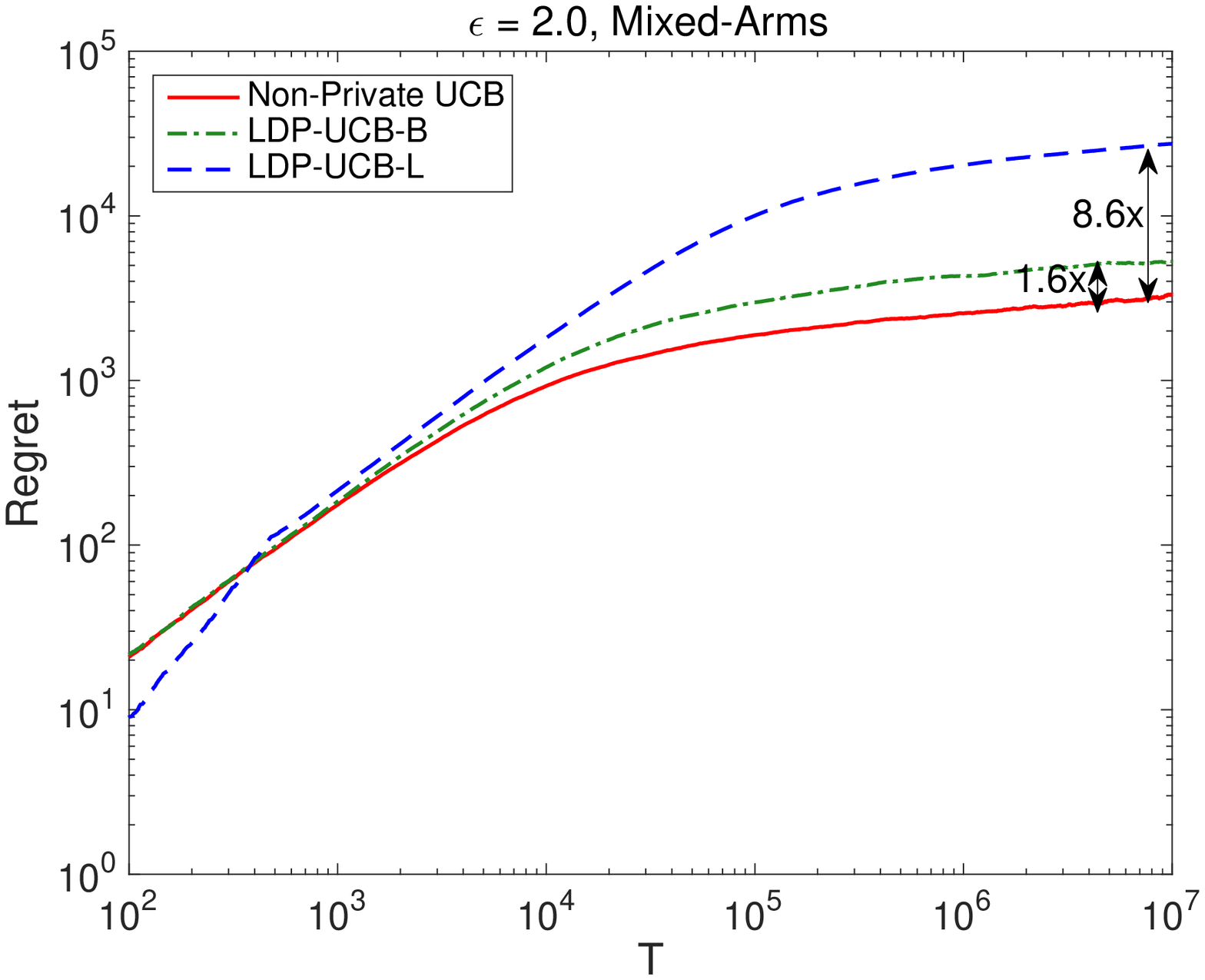}
			\caption{Mixed types of arms, $\epsilon = 2.0$.}
		\end{subfigure}\ \ 
		\begin{subfigure}[b]{0.32\textwidth}
			\includegraphics[scale=0.25]{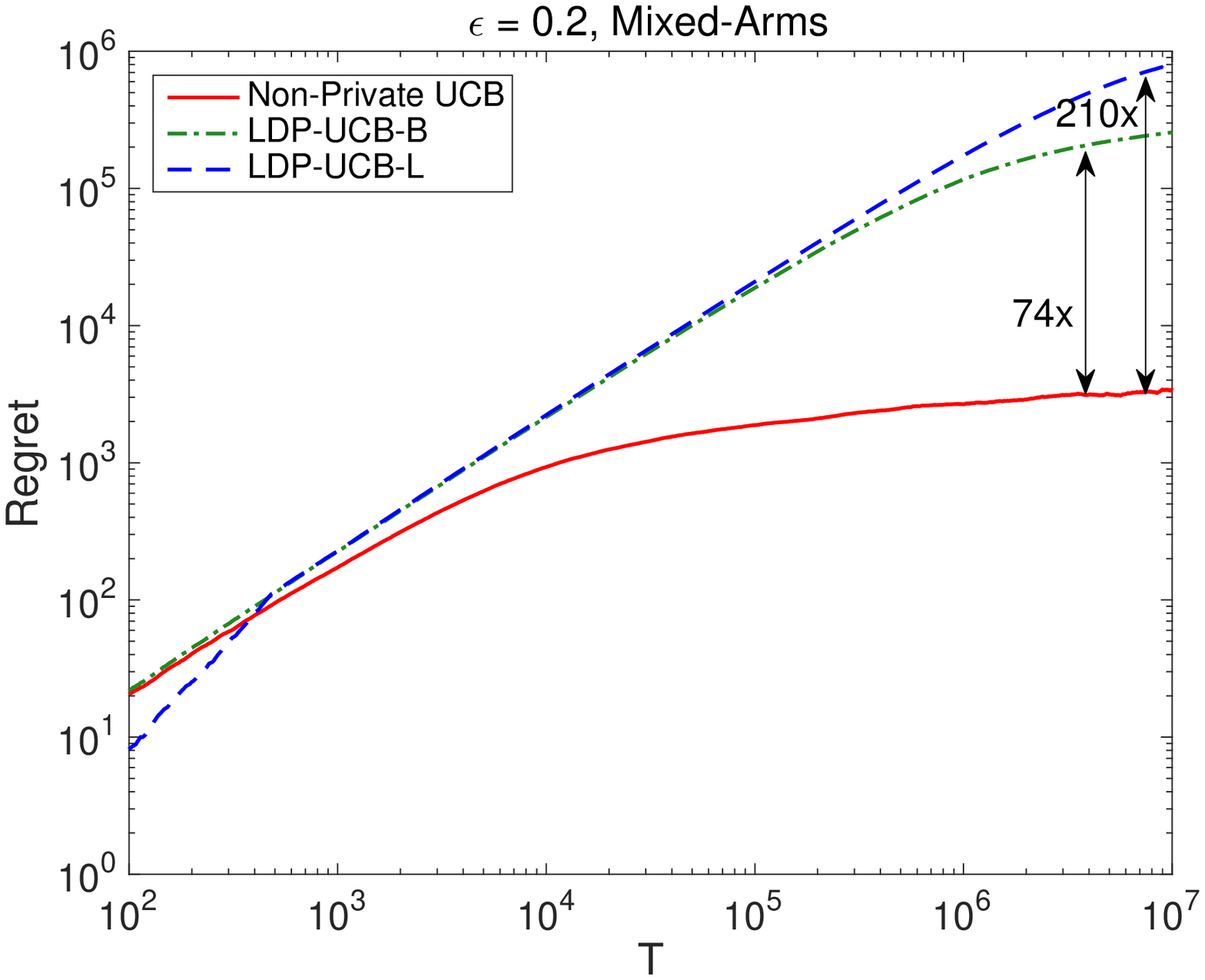}
			\caption{Mixed types of arms, $\epsilon = 0.2$.}
		\end{subfigure}
		\caption{Numerical results for LDP-UCB-L and LDP-UCB-B.}\label{fig:bounded}
	\end{figure}
	
	In Figure~\ref{fig:bounded} (a), we fix $\epsilon = 2.0$. We can see that the regret of LDP-UCB-B is slightly larger than that of the non-private UCB and smaller than that of LDP-UCB-L. The ratio of the regrets of LDP-UCB-B to non-private UCB is $1.6$, and that of LDP-UCB-L is $8.5$. In theory, the upper bounds of the ratios are $(\frac{e^\epsilon+1}{e^\epsilon-1})^2 = 1.7$ for LDP-UCB-B and $(1+\frac{4}{\epsilon})^2 = 9.0$ for LDP-UCB-L. Thus, the numerical results in Figure~\ref{fig:bounded} (a) are consistent with our theoretical results. In Figure~\ref{fig:bounded} (b), we fix $\epsilon = 0.2$. The ratio of the regret of LDP-UCB-L (LDP-UCB-B) to non-private UCB becomes much larger, which is consistent with the theory that the ratio grows with $(1+\frac{4}{\epsilon})^2$ ($(\frac{e^\epsilon+1}{e^\epsilon-1})^2$). In theory, the ratios are upper bounded by $441$ and $101$, respectively, which are larger and not far away from the empirical results. %Another finding is that when $\epsilon$ increases from $0.2$ to $2.0$, the ratio of LDP-UCB-B's regret to LDP-UCB-L's decreases from $5.3$ to $2.9$, also consistent with the theory.
	
	In Figure~\ref{fig:bounded} (c), we compare the regrets of LDP-UCB-L with different $\epsilon$-values. In Figure~\ref{fig:bounded} (d), we do the same for LDP-UCB-B. From (c) and (d), we can see that the regrets of LDP-UCB-L and LDP-UCB-B both increase with $\frac{1}{\epsilon}$ and the convergence speed both decrease as $\epsilon$ decreases. In Figure~\ref{fig:bounded} (e) and (f), the rewards of the arms follow different types of rewards, and the performances of LDP-UCB-L and LDP-UCB-B are similar to that for Bernoulli arms, which indicates that our algorithms work for arms with various types of latent distributions. 
	
\section{Conclusion}
	This paper studied the multi-armed bandit problem with local differential privacy guarantee. We proved the tight regret lower bound and proposed algorithms with tight regret upper bounds (up to constant factors). Numerical results also confirmed our theoretical results.

\bibliography{dpbandit}

\begin{thebibliography}{}

\bibitem[Agrawal and Goyal, 2012]{ThompsonSampling2012}
Agrawal, S. and Goyal, N. (2012).
\newblock Analysis of {T}hompson sampling for the multi-armed bandit problem.
\newblock In {\em Conference on learning theory}, pages 39--1.

\bibitem[Arratia and Gordon, 1989]{BinomialKLBound1989}
Arratia, R. and Gordon, L. (1989).
\newblock Tutorial on large deviations for the binomial distribution.
\newblock {\em Bulletin of Mathematical Biology}, 51(1):125--131.

\bibitem[Auer et~al., 2002]{FiniteAnalysis2002}
Auer, P., Cesa-Bianchi, N., and Fischer, P. (2002).
\newblock Finite-time analysis of the multiarmed bandit problem.
\newblock {\em Machine Learning}, 47(2--3):235--256.

\bibitem[Basu et~al., 2019]{DPBanditLowerBound2019}
Basu, D., Dimitrakakis, C., and Tossou, A. (2019).
\newblock Differential privacy for multi-armed bandits: {W}hat is it and what
  is its cost?
\newblock {\em arXiv preprint arXiv:1905.12298}.

\bibitem[Berry and Fristedt, 1985]{EarlyBandit1985}
Berry, D.~A. and Fristedt, B. (1985).
\newblock Bandit problems: {S}equential allocation of experiments ({M}onographs
  on statistics and applied probability).
\newblock {\em London: Chapman and Hall}, 5:71--87.

\bibitem[Chan et~al., 2011]{LaplaceConcentration2011}
Chan, T.-H.~H., Shi, E., and Song, D. (2011).
\newblock Private and continual release of statistics.
\newblock {\em ACM Transactions on Information and System Security},
  14(3):1--24.

\bibitem[Cover and Thomas, 2012]{InformationTheory2012}
Cover, T.~M. and Thomas, J.~A. (2012).
\newblock {\em Elements of information theory}.
\newblock John Wiley \& Sons.

\bibitem[Dragomir and Glu{\v{s}}cevic, 2001]{KLInequality2001}
Dragomir, S.~S. and Glu{\v{s}}cevic, V. (2001).
\newblock Some inequalities for the kullback-leibler and $\chi$2-distances in
  information theory and applications.
\newblock {\em Tamsui Oxford Journal of Mathematical Sciences}, 17(2):97--111.

\bibitem[Dwork et~al., 2016]{LaplaceDP2016}
Dwork, C., McSherry, F., Nissim, K., and Smith, A. (2016).
\newblock Calibrating noise to sensitivity in private data analysis.
\newblock {\em Journal of Privacy and Confidentiality}, 7(3):17--51.

\bibitem[Dwork et~al., 2014]{DPFoundation2014}
Dwork, C., Roth, A., et~al. (2014).
\newblock The algorithmic foundations of differential privacy.
\newblock {\em Foundations and Trends{\textregistered} in Theoretical Computer
  Science}, 9(3--4):211--407.

\bibitem[Gajane et~al., 2018]{DPBanditsCorrupt2018}
Gajane, P., Urvoy, T., and Kaufmann, E. (2018).
\newblock Corrupt bandits for preserving local privacy.
\newblock In {\em Algorithmic Learning Theory}, pages 387--412.

\bibitem[Hannun et~al., 2019]{DPBanditMultiPartyContextual2019}
Hannun, A., Knott, B., Sengupta, S., and van~der Maaten, L. (2019).
\newblock Privacy-preserving multi-party contextual bandits.
\newblock {\em arXiv preprint arXiv:1910.05299}.

\bibitem[Hoeffding, 1963]{Hoeffding1963}
Hoeffding, W. (1963).
\newblock Probability inequalities for sums of bounded random variables.
\newblock {\em Journal of the American statistical association}, pages
  409--426.

\bibitem[Huang et~al., 2015]{DPOptimization2015}
Huang, Z., Mitra, S., and Vaidya, N. (2015).
\newblock Differentially private distributed optimization.
\newblock In {\em International Conference on Distributed Computing and
  Networking}, pages 1--10.

\bibitem[Lai and Robbins, 1985]{KLLowerBound1985}
Lai, T.~L. and Robbins, H. (1985).
\newblock Asymptotically efficient adaptive allocation rules.
\newblock {\em Advances in applied mathematics}, 6(1):4--22.

\bibitem[Lattimore and Szepesv{\'a}ri, 2018]{BanditSurvey2018}
Lattimore, T. and Szepesv{\'a}ri, C. (2018).
\newblock Bandit algorithms.

\bibitem[Malekzadeh et~al., 2019]{DPBanditEmpirical2019}
Malekzadeh, M., Athanasakis, D., Haddadi, H., and Livshits, B. (2019).
\newblock Privacy-preserving bandits.
\newblock {\em arXiv preprint arXiv:1909.04421}.

\bibitem[Mishra and Thakurta, 2014]{DPBanditWithContexual2014}
Mishra, N. and Thakurta, A. (2014).
\newblock Private stochastic multi-arm bandits: From theory to practice.
\newblock In {\em ICML Workshop on Learning, Security, and Privacy}.

\bibitem[Mishra and Thakurta, 2015]{DPBandit2015}
Mishra, N. and Thakurta, A. (2015).
\newblock ({N}early) optimal differentially private stochastic multi-arm
  bandits.
\newblock In {\em Conference on Uncertainty in Artificial Intelligence}, pages
  592--601.

\bibitem[Mohammed et~al., 2011]{DPDataRelease2011}
Mohammed, N., Chen, R., Fung, B., and Yu, P.~S. (2011).
\newblock Differentially private data release for data mining.
\newblock In {\em ACM SIGKDD international conference on Knowledge discovery
  and data mining}, pages 493--501. ACM.

\bibitem[Sajed, 2019]{DPBandit2019}
Sajed, T. (2019).
\newblock Optimal differentially private finite armed stochastic bandit.

\bibitem[Shariff and Sheffet, 2018]{DPBanditContextualSequential2018}
Shariff, R. and Sheffet, O. (2018).
\newblock Differentially private contextual linear bandits.
\newblock In {\em Advances in Neural Information Processing Systems}, pages
  4296--4306.

\bibitem[Tossou and Dimitrakakis, 2015]{DPBanditSequentialMultiAgent2015}
Tossou, A.~C. and Dimitrakakis, C. (2015).
\newblock Differentially private, multi-agent multi-armed bandits.
\newblock In {\em European Workshop on Reinforcement Learning (EWRL)}.

\bibitem[Tossou and Dimitrakakis, 2016]{DPBanditSequential2016}
Tossou, A.~C. and Dimitrakakis, C. (2016).
\newblock Algorithms for differentially private multi-armed bandits.
\newblock In {\em AAAI Conference on Artificial Intelligence}.

\bibitem[Tossou and Dimitrakakis, 2017]{DPBanditAdcersarial2017}
Tossou, A. C.~Y. and Dimitrakakis, C. (2017).
\newblock Achieving privacy in the adversarial multi-armed bandit.
\newblock In {\em AAAI Conference on Artificial Intelligence}.

\bibitem[Wang and Hegde, 2019]{DPQLearning2019}
Wang, B. and Hegde, N. (2019).
\newblock Privacy-preserving {Q}-learning with functional noise in continuous
  spaces.
\newblock In {\em Advances in Neural Information Processing Systems}, pages
  11323--11333.

\end{thebibliography}

\clearpage

\begin{appendices}
	{\LARGE Supplementary material}
	
\section{Proofs}
\subsection{Proof of Theorem~\ref{Thm:LB}}
\RestateLB*
	\begin{proof}[\textbf{Proof.}]
		Let $n$ Bernoulli arms with means $\mu_1, \mu_2,...,\mu_n$ be given. Let $M : [0,1] \rightarrow \mathbb{R}$ be an arbitrary $\epsilon$-DP randomized mapping, i.e., $M$ satisfies Definition~\ref{Def:DP}. Since the rewards of the arms are Bernoulli, we only need to consider $M(0)$ and $M(1)$. Let $f : \mathbb{R} \rightarrow \mathbb{R}^+$ be the PDF of $M(0)$ and $g : \mathbb{R} \rightarrow \mathbb{R}^+$ be the PDF of $M(1)$. Here, we allow points masses on $f$ and $g$, and if both $f$ and $g$ have point masses on $x$ with values $r$ and $s$, respectively, then we say $f(x) / g(x) = r/s$. 
		
		Now, we let $a$ be an arm with Bernoulli$(p)$ rewards and $b$ be an arm with Bernoulli$(q)$ rewards. Without loss of generality, we assume $p \geq q$. Let $h_a$ be the PDF of the output of CTB$(\epsilon)$ on arm $a$ and $h_b$ be the PDF of the output of CTB$(\epsilon)$ on $b$. We have
		\begin{align}
		\forall x\in \mathbb{R},\ h_a(x) = pf(x) + (1-p)g(x) \mbox{, and } h_b(x) = qf(x) + (1-q)g(x). \nonumber
		\end{align}
		Let $\Omega_f$ be the support of $f$ and $\Omega_g$ be the support of $g$. Since $M$ is $\epsilon$-DP, we have 
		\begin{align}
		\forall x \in \Omega_f \cup \Omega_g,\ |\log(f(x) / g(x))| \leq \epsilon, \nonumber
		\end{align}
		which also implies $\Omega_f = \Omega_g$. To simplify notation, we let $\Omega = \Omega_f = \Omega_g$. 
		
		The key to the proof is to show the following lemma.
		\begin{restatable}{lemma}{RestateKL}\label{Lm:Dkl}
			Let $0\leq p, q \leq 1$ and any two PDFs $f$ and $g$ with the same support $\Omega \subset \mathbb{R}$ and $\sup_{x\in\Omega}|\log(f(x)/g(x))| \leq \epsilon$ be given. Define $h_a := pf + (1-p)g$ and $h_b := qf + (1-q)g$. We have
			\begin{align}
				D_{\mbox{\tiny{KL}}}(h_a||h_b) \leq (e^\epsilon - e^{-\epsilon})^2(p-q)^2. \nonumber
			\end{align}
		\end{restatable}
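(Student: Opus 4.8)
The plan is to bound the KL divergence by the $\chi^2$-divergence and then exploit the fact that, since $h_a-h_b=(p-q)(f-g)$, the entire dependence on $p$ and $q$ factors out as $(p-q)^2$. First I would invoke the elementary inequality $\log u\le u-1$ to write
\begin{align}
D_{\mbox{\tiny{KL}}}(h_a\|h_b)=\int_\Omega h_a\log\frac{h_a}{h_b}\,\mathrm{d}x\le\int_\Omega h_a\Big(\frac{h_a}{h_b}-1\Big)\mathrm{d}x=\int_\Omega\frac{(h_a-h_b)^2}{h_b}\,\mathrm{d}x, \nonumber
\end{align}
where the last equality uses $\int_\Omega h_a=\int_\Omega h_b=1$. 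To accommodate point masses I would read all integrals with respect to a fixed common dominating measure (Lebesgue plus counting measure on the atoms); this is legitimate because the hypothesis $\sup_\Omega|\log(f/g)|\le\epsilon$ already forces $\Omega_f=\Omega_g=\Omega$, so $h_b>0$ exactly on $\Omega$ and the quotients are well defined there. Substituting $h_a-h_b=(p-q)(f-g)$ then reduces the claim to the instance-free estimate
\begin{align}
\int_\Omega\frac{(f(x)-g(x))^2}{h_b(x)}\,\mathrm{d}x\le(e^\epsilon-e^{-\epsilon})^2. \nonumber
\end{align}

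Next I would turn this integral bound into a pointwise one. On $\Omega$ we have $g>0$, so write $f(x)=g(x)e^{t(x)}$ with $|t(x)|\le\epsilon$; then $f-g=g(e^{t}-1)$ and $h_b=g(qe^{t}+1-q)$, and the integrand equals $g(x)\cdot\phi(t(x),q)$ with $\phi(t,q):=\dfrac{(e^{t}-1)^2}{qe^{t}+1-q}$. Since $\int_\Omega g=1$, it suffices to show $\phi(t,q)\le(e^\epsilon-e^{-\epsilon})^2$ for all $q\in[0,1]$ and $|t|\le\epsilon$. The key manipulation is the factorization $(e^{t}-1)^2=e^{t}(e^{t/2}-e^{-t/2})^2$, which gives
\begin{align}
\phi(t,q)=\frac{e^{t}}{qe^{t}+1-q}\,(e^{t/2}-e^{-t/2})^2=\frac{1}{q+(1-q)e^{-t}}\,(e^{t/2}-e^{-t/2})^2. \nonumber
\end{align}
The denominator $q+(1-q)e^{-t}$ is a convex combination of $1$ and $e^{-t}$, hence at least $\min(1,e^{-t})\ge e^{-\epsilon}$ when $|t|\le\epsilon$; and $(e^{t/2}-e^{-t/2})^2$ is even and increasing in $|t|$, hence at most $(e^{\epsilon/2}-e^{-\epsilon/2})^2$. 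Multiplying the two bounds, $\phi(t,q)\le e^{\epsilon}(e^{\epsilon/2}-e^{-\epsilon/2})^2=(e^{\epsilon}-1)^2\le(e^{\epsilon}-e^{-\epsilon})^2$, which closes the argument after integrating against $g$ and reinstating the $(p-q)^2$ factor.

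The only genuinely delicate point is this final pointwise bound. A cruder route — estimating $qe^{t}+1-q\ge e^{-|t|}$ and $(e^{t}-1)^2\le(e^{\epsilon}-1)^2$ separately — only yields $\phi\le e^{|t|}(e^{\epsilon}-1)^2$, which can strictly exceed $(e^{\epsilon}-e^{-\epsilon})^2$ (for instance at $\epsilon=t=1$). So it is essential to keep the numerator $e^{t}$ paired with the denominator as $\frac{e^{t}}{qe^{t}+1-q}$ rather than bounding the two factors in isolation. Everything else in the argument — the $\log u\le u-1$ step, the dominating-measure bookkeeping for atoms, and the two one-variable monotonicity facts about $q+(1-q)e^{-t}$ and $(e^{t/2}-e^{-t/2})^2$ — is routine.
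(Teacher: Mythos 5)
Your proof is correct, but it takes a genuinely different route from the paper's. The paper invokes a cited ratio-based bound (Lemma~1 of \cite{KLInequality2001}: if $r \leq f_1/g_1 \leq R$ on the common support then $D_{\mbox{\tiny{KL}}}(f_1||g_1) \leq \frac{(R-r)^2}{4rR}$), computes the extreme likelihood ratios $R = \frac{pe^\epsilon+1-p}{qe^\epsilon+1-q}$ and $r = \frac{pe^{-\epsilon}+1-p}{qe^{-\epsilon}+1-q}$ of the two mixtures, and then bounds the resulting algebraic expression using $x^2+(1-x)^2 \geq 1/2$. You instead bound the KL divergence by the $\chi^2$-divergence via $\log u \leq u-1$, exploit $h_a-h_b=(p-q)(f-g)$ to peel off the $(p-q)^2$ factor at the start, and finish with a pointwise estimate of $\frac{(e^t-1)^2}{qe^t+1-q}$ for $|t|\leq\epsilon$; the factorization $(e^t-1)^2=e^t(e^{t/2}-e^{-t/2})^2$ that keeps $e^t$ paired with the denominator is exactly the right move, and your warning that the cruder separate bounds fail is accurate. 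Your argument is self-contained (no external KL fact needed), the measure-theoretic handling of atoms matches the paper's conventions, and it even yields the slightly sharper constant $(e^\epsilon-1)^2(p-q)^2 \leq (e^\epsilon-e^{-\epsilon})^2(p-q)^2$, which more than suffices for the lemma (and would marginally strengthen the constant in the lower bound of Theorem~\ref{Thm:LB}). The paper's route, by contrast, is a near-mechanical application of a known two-sided-ratio inequality, at the cost of heavier algebra and a marginally looser constant.
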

	
		Thus, for any suboptimal arm $a$ and $\epsilon$-DP randomized mapping $M$, we have 
		\begin{align}
			D_{\mbox{\tiny{KL}}}(M(R^1_a)||M(R^1_{a^*})) \leq (e^\epsilon - e^{-\epsilon})^2(\mu_{a} - \mu_{a^*})^2 = (e^\epsilon - e^{-\epsilon})^2\Delta_a^2, \nonumber
		\end{align}
		where $a^*$ is the optimal arm.
		
		Since the bandit algorithm only has access to the private responses, i.e., $A^{t+1} \in \sigma(A^1,A^2,...,A^t,M(R^1),M(R^2),...,M(R^t))$ for any time $t$, by Theorem~2 in \cite{BinomialKLBound1989}, we conclude that if
		\begin{align}
			\sum_{a:\Delta_a > 0}\mathbb{E}N^T_a = o(T^\alpha) \mbox{ for every }\alpha > 0, \nonumber
		\end{align}
		then for any suboptimal arm $a$, 
		\begin{align}
			\liminf_{T\rightarrow \infty}\frac{\mathbb{E}N^T_a}{\log{T}}  \geq \frac{1}{D_{\mbox{\tiny{KL}}}(M(R_a^1)||M(R_{a^*}^1))}\geq \frac{1}{(e^\epsilon - e^{-\epsilon})^2\Delta_a^2}. \nonumber
		\end{align}
		If $\sum_{a:\Delta_a > 0}\mathbb{E}N^T_a = \Omega(T^\alpha)$ for some $\alpha > 0$, then for some arm $a$, $N^T_a$ is $\omega(\log{T})$, which implies that the regret $R(T)$ is $\omega(\log{T})$. Thus, we conclude that the regret $R(T)$ of any $\epsilon$-LDP policy must satisfy
		\begin{align}
			\liminf_{T\rightarrow\infty}\frac{R(T)}{\log{T}} = \liminf_{T\rightarrow\infty}\frac{\sum_{a:\Delta_a > 0}[\Delta_a\mathbb{E}N^T_a]}{\log{T}} \geq \frac{1}{(e^\epsilon - e^{-\epsilon})^2}\sum_{a:\Delta_a>0}\frac{1}{\Delta_a}. \nonumber
		\end{align}
		
		This completes the proof of Theorem~\ref{Thm:LB}.
	\end{proof}

\subsection{Proof of Theorem~\ref{Thm:LDP-UCB-L}}
\RestateLDPUCBL*
\begin{proof}[\textbf{Proof.}]
	The $\epsilon$-LDP of LDP-UCB-L follows from the the $\epsilon$-DP of CTL stated in Lemma~\ref{Lm:CTL}.
	
	\textbf{Distribution-dependent regret.} 
	Let $a^*$ be the arm with the largest mean reward and recall $\Delta_a = \mu_{a^*} - \mu_a$. An arm $a$ is said to be \textit{optimal} if $\Delta_a = 0$, and is said to be \textit{suboptimal} if $\Delta_{a} > 0$. For arm $a$ and time $t$, we use $R_a^t$ to denote the reward of the $t$-th pull and use $X_a^t$ denote the corresponding private response returned by CTL$(\epsilon)$. $R^t_a$ is with mean $\mu_a$ and support $[0,1]$, and $X^t_a = R^t_a + L^t_a$, where $(L^t_a, a\in[n], t\in\mathbb{Z}^+)$ are independent Laplace$(1/\epsilon)$ variables.
	
	By the Chernoff-Hoeffding inequality \cite{Hoeffding1963}, for any arm $a$, positive integer $k$, and time $t$, we have
	\begin{align}
	\mathbb{P}\Big\{\frac{1}{k}\sum_{r=1}^k{R^r_a} \geq \mu_{a} + \sqrt{\frac{2\log{t}}{k}}\Big\} \leq \exp\Big\{-2k\cdot\frac{2\log{t}}{k}\Big\} = t^{-4},\nonumber
	\end{align}
	and
	\begin{align}
	\mathbb{P}\Big\{\frac{1}{k}\sum_{r=1}^k{R^r_a} \leq \mu_{a} - \sqrt{\frac{2\log{t}}{k}}\Big\} \leq \exp\Big\{-2k\cdot\frac{2\log{t}}{k}\Big\} = t^{-4}.\nonumber
	\end{align}
	
	Also, for any time $t$ and positive integer $k > 4\log{t}$, setting $\nu = \sqrt{\sum_{r=1}^k(1/\epsilon)^2} = \sqrt{k}/\epsilon$ and $\lambda = \sqrt{({32k\log{t}})/{\epsilon^2}}$, we have $\lambda < \sqrt{8k^2/{\epsilon^2}} = 2\sqrt{2}\epsilon\nu^2 = 2\sqrt{2}k/\epsilon$, which by Lemma~\ref{Lm:LaplaceConcentration} implies
	\begin{align}
	\mathbb{P}\Big\{\sum_{r=1}^k{L^r_a} \geq k\sqrt{\frac{32\log{t}}{k\epsilon^2}}\Big\} \leq \exp\Big\{-\frac{\lambda^2}{8\nu^2}\Big\} =  \exp\Big\{-\frac{1}{8}\cdot\frac{\epsilon^2}{k}\cdot\frac{32k\log{t}}{\epsilon^2}\Big\} = t^{-4},\nonumber
	\end{align}
	and 
	\begin{align}
	\mathbb{P}\Big\{\sum_{r=1}^k{L^r_a} \leq -k\sqrt{\frac{32\log{t}}{k\epsilon^2}}\Big\} = \mathbb{P}\Big\{\sum_{r=1}^k{L^r_a} \geq k\sqrt{\frac{32\log{t}}{k\epsilon^2}}\Big\} \leq t^{-4},\nonumber
	\end{align}
	Therefore, for any arm $a$, time $t$, and positive integer $k > 4\log{t}$, we have
	\begin{align}
	& \mathbb{P}\Big\{u^t_a \geq \mu_a + 2\Big(\sqrt{\frac{2\log{t}}{N^t_a}} + \sqrt{\frac{32\log{t}}{N^t_a\epsilon^2}}\Big)\Big| N^t_a = k\Big\} \nonumber \\
	& = \mathbb{P}\Big\{\hat{\mu}^t_a \geq \mu_a + \sqrt{\frac{2\log{t}}{N^t_a}} + \sqrt{\frac{32\log{t}}{N^t_a\epsilon^2}}\Big| N^t_a = k\Big\} \nonumber \\
	& \leq \mathbb{P}\Big\{\frac{1}{k}\sum_{r=1}^k{R^t_a} \geq \mu_a + \sqrt{\frac{2\log{t}}{k}}\Big\} + \mathbb{P}\Big\{\sum_{r=1}^k{L^t_a} \geq k\sqrt{\frac{32\log{t}}{k\epsilon^2}} \Big\} \nonumber \\
	& \leq t^{-4} + t^{-4} = 2t^{-4}, \nonumber
	\end{align}
	and 
	\begin{align}
	\mathbb{P}\{u^t_a \leq \mu_a\mid N^t_a = k\} & = \mathbb{P}\Big\{\hat{\mu}^t_a \leq \mu_a - \sqrt{\frac{2\log{t}}{N^t_a}} - \sqrt{\frac{32\log{t}}{N^t_a\epsilon^2}}\Big| N^t_a = k\Big\} \nonumber \\
	& \leq \mathbb{P}\Big\{\frac{1}{k}\sum_{r=1}^k{R^t_a} \leq \mu_a - \sqrt{\frac{2\log{t}}{k}}\Big\} + \mathbb{P}\Big\{\sum_{r=1}^k{L^t_a} \leq -k\sqrt{\frac{32\log{t}}{k\epsilon^2}} \Big\} \nonumber \\
	& \leq t^{-4} + t^{-4} = 2t^{-4}. \nonumber
	\end{align}
	Here, we note that the above inequalities hold only if $k > 4\log{t}$ as required by Lemma~\ref{Lm:LaplaceConcentration}. This is the reason why we have Lines~5 and 6 in the algorithm LDP-UCB-L.
	
	Set
	\begin{align}
	v_{a,t} = \frac{8(1 + 4/\epsilon)^2\log{t}}{\Delta_a^2}, \nonumber
	\end{align}
	and we have 
	\begin{align}
	N^t_a > v_{a,t} \implies \mu_a + 2\Big(\sqrt{\frac{2\log{t}}{N^t_a}} + \sqrt{\frac{32\log{t}}{N^t_a\epsilon^2}}\Big) < \mu_a + \Delta_a = \mu_{a^*}.\nonumber
	\end{align}
	Also, since $\Delta_a \leq 1$, we have $v_{a,t} > 4\log{t}$. 
	
	For any suboptimal arm $a$ and time $t$, we have
	\begin{align}
	& \mathbb{P}\Big\{u^t_a \geq u^t_{a^*}\Big| N^t_a > v_{a,t}\Big\} \nonumber \\ 
	& \leq \sum_{r=\lfloor 1+ v_{a,t}\rfloor}^{t}\sum_{s=1}^t\Big[\mathbb{P}\Big\{u^t_a \geq \mu_{a^*}\Big|N^t_a = r\Big\} + \mathbb{P}\Big\{u^t_{a^*} \leq \mu_{a^*}\Big|N^t_{a^*} = s\Big\}\Big] \nonumber\\
	& \leq \sum_{r=\lfloor 1+ v_{a,t}\rfloor}^{t}\sum_{s=1}^t\Big[\mathbb{P}\Big\{u^t_a \geq \mu_a\!+\!2\Big(\sqrt{\frac{2\log{t}}{N^t_a}}\! + \!\sqrt{\frac{32\log{t}}{N^t_a\epsilon^2}}\Big)\Big|N^t_a = r\Big\}\!+\mathbb{P}\Big\{u^t_{a^*} \leq \mu_{a^*}\Big|N^t_{a^*} = s\Big\}\Big] \nonumber\\
	& \leq t\cdot t\cdot (t^{-4} + t^{-4}) = 4t^{-2}.\nonumber
	\end{align}
	
	Recall that $A^t$ is the $t$-th arm to be pulled and note that for any round $t > n$ and suboptimal arm $a$ with $N^t_a > 4\log{t}$, arm $a$ is pulled only if $u^t_a \geq u^t_{a^*}$. Therefore, we have 
	\begin{align}
	\mathbb{E}[N^T_a] =& \sum_{t=1}^T\Big[\mathbb{P}\{N^t_a\leq v_{a,t}\}\mathbb{P}\Big\{A^t = a \Big| N^t_a \leq v_{a,t}\Big\} + \mathbb{P}\{N^t_a > v_{a,t}\}\mathbb{P}\Big\{A^t = a \Big| N^t_a > v_{a,t}\Big\}\Big] \nonumber\\
	\leq & 1 + v_{a,T} + \sum_{t=1}^T\Big[1\cdot\mathbb{P}\Big\{A^t = a \Big| N^t_a > v_{a,t}\Big\}\Big] \nonumber\\
	\stackrel{(a)}{\leq} &  1 +  \frac{8(1 + 4/\epsilon)^2\log{T}}{\Delta_a^2} + \sum_{t=1}^T\mathbb{P}\Big\{u^t_a \geq u^t_{a^*}\Big| N^t_a > \frac{8(1 + 4/\epsilon)^2\log{t}}{\Delta_a^2}\Big\} \nonumber \\
	\leq & \frac{8(1 + 4/\epsilon)^2\log{T}}{\Delta_a^2} + 1 + 4\sum_{t=1}^Tt^{-2}\nonumber \\
	\leq & \frac{8(1 + 4/\epsilon)^2\log{T}}{\Delta_a^2} + 1 + \frac{2\pi^2}{3},\nonumber
	\end{align}
	where (a) is due to $v_{a,t} = \frac{8(1 + 4/\epsilon)^2\log{t}}{\Delta_a^2}$ and $v_{a,t} > 4\log{t}$ for all arms $a$ ($\Delta_a \leq 1$ always holds) and time $t$.
	
	Thus, the (expected) regret of LDP-UCB-L is at most
	\begin{align}
	\sum_{a:\Delta_a>0}\Big[\frac{8(1 + 4/\epsilon)^2\log{T}}{\Delta_a} + \Big(1 + \frac{2\pi^2}{3}\Big)\Delta_a\Big] \nonumber.
	\end{align}
	The proof of the distribution-dependent regret is complete.
	
	\textbf{Distribution-free regret.} Observe that
	\begin{align}
	\mathbb{E}[N_a(T)] \leq \frac{8(1 + 4/\epsilon)^2\log{T}}{\Delta_a^2} + 1 + \frac{2\pi^2}{3} =  O\Big(\frac{\log{T}}{\epsilon^2\Delta_a^2} + 1\Big). \nonumber
	\end{align} 
	Let $\alpha$ be a number in $(0,1)$. For all arms $a$ with $\Delta_a \leq \alpha$, the regret incurred by pulling these arms is upper bounded by $T\alpha$. For any arm $a$ with $\Delta_a > \alpha$, the expected regret incurred by pulling arm $a$ is upper bounded by $\mathbb{E}[N_a(T)\Delta_a] =O(\frac{1}{\epsilon^2\Delta_a}\log{T} + 1)$. Thus, the regret of LDP-UCB-L is at most
	\begin{align}
	T\alpha + O\Big(\frac{n}{\epsilon^2\alpha}\log{T} + n\Big).  \nonumber
	\end{align}
	By choosing $\alpha = \Theta(\sqrt{\frac{n\log{T}}{T\epsilon^2}})$ and recalling $T \geq n$, the regret is upper bounded by
	\begin{align}
	O\Big(T\sqrt{\frac{n\log{T}}{T\epsilon^2}} + \sqrt{\frac{T\epsilon^2}{n\log{T}}}\frac{n}{\epsilon^2}\log{T} + n\Big) = O\Big(\frac{1}{\epsilon}\sqrt{nT\log{T}}\Big). \nonumber
	\end{align}
	This completes the proof of the distribution-free regret, and the proof of Theorem~\ref{Thm:LDP-UCB-B} is complete.
\end{proof}

\subsection{Proof of Lemma~\ref{Lm:CTB}}
\RestateCTB*
\begin{proof}[\textbf{Proof.}]
	Let arm $a$ with mean reward $\mu_a$ and privacy parameter $\epsilon > 0$ be given. Let $R$ denote the reward received by pulling the arm and use $X$ to denote the output (returned value) of CTB$(\epsilon)$. We have $X = M_B(R)$. The value of $X$ is either $1$ or $0$, and thus, $X$ follows some Bernoulli distribution.
	
	Let $r, r'$ in $[0,1]$ be given. Observe that
	\begin{align}
	\mathbb{P}\{M_B(r) = 1\} = \frac{re^\epsilon + 1 - r}{e^\epsilon + 1} = \frac{1}{2} + (2r - 1) \cdot \frac{e^\epsilon - 1}{2(e^\epsilon + 1)}, \nonumber
	\end{align}
	while by $\mathbb{E}[R] = \mu_a$ the mean reward of arm $a$, implies 
	\begin{align}
	\mathbb{E}[X] = \mathbb{E}[M_B(R)] = \mathbb{E}\Big[\frac{1}{2} + (2R - 1) \cdot \frac{e^\epsilon - 1}{2(e^\epsilon + 1)}\Big] = \frac{1}{2} + (2\mu_a - 1) \cdot \frac{e^\epsilon - 1}{2(e^\epsilon + 1)}. \nonumber
	\end{align}
	This proves the mean of the returned value.
	
	Since $\mathbb{P}\{M(r) = 1\}$ is increasing on $r$, we have 
	\begin{align}
	\frac{\mathbb{P}\{M(r) = 1\}}{\mathbb{P}\{M(r') = 1\}} \leq \frac{\mathbb{P}\{M(1) = 1\}}{\mathbb{P}\{M(0) = 1\}} = \frac{e^\epsilon/(e^\epsilon + 1)}{1/(e^\epsilon + 1)} = e^\epsilon. \nonumber
	\end{align}
	Also, since $\mathbb{P}\{M(r) = 0\}$ is decreasing on $r$, we have
	\begin{align}
	\frac{\mathbb{P}\{M(r) = 0\}}{\mathbb{P}\{M(r') = 0\}} \leq \frac{\mathbb{P}\{M(0) = 0\}}{\mathbb{P}\{M(1) = 0\}} = \frac{e^\epsilon/(e^\epsilon + 1)}{1/(e^\epsilon + 1)} = e^\epsilon. \nonumber
	\end{align}
	Thus, by the definition of $\epsilon$-DP stated in Definition~\ref{Def:DP}, we conclude that CTB is $\epsilon$-DP. This completes the proof.
\end{proof}

\subsection{Proof of Theorem~\ref{Thm:LDP-UCB-B}}
\RestateLDPUCBB*
	\begin{proof}[\textbf{Proof.}]
	The $\epsilon$-LDP of LDP-UCB-B follows from the $\epsilon$-DP of CTB stated in Lemma~\ref{Lm:CTB}.
	
	\textbf{Distribution-dependent regret.} According to Lemma~\ref{Lm:CTB}, for any arm $a$, the private response generated by CTB$(a,\epsilon)$ follows the Bernoulli$(\mu_{a,\epsilon})$ distribution, where 
	\begin{align}
	\mu_{a,\epsilon} := \frac{1}{2} + \frac{(2\mu_a-1)(e^\epsilon-1)}{2(e^\epsilon + 1)}. \nonumber
	\end{align} 
	Define $\mu^*_\epsilon := \max_{a\in[n]}\mu_{a.\epsilon}$ and 
	\begin{align}
	\Delta_{a,\epsilon} := \mu^*_\epsilon - \mu_{a,\epsilon} = \frac{e^\epsilon - 1}{e^\epsilon + 1} \cdot \Delta_a \nonumber 
	\end{align}
	for any arm $a$ in $[n]$. We note that $\Delta_a > 0$ if and only if $\Delta_{a,\epsilon} > 0$. An arm $a$ is said to be \textit{optimal} if $\Delta_a = 0$, and is said to be \textit{suboptimal} if $\Delta_{a} > 0$. 
	
	For any suboptimal arm $a$ and time $t$, we have
	\begin{align}
	N^t_a > \frac{8\log{T}}{\Delta_{a,\epsilon}^{2}} \implies \mu_{a,\epsilon} + 2\sqrt{\frac{2\log{t}}{N^t_a}} < \mu_{a,\epsilon} + \Delta_{a,\epsilon} = \mu^*_\epsilon. \nonumber
	\end{align}
	Also, by the Chernoff-Hoeffding Inequality \cite{Hoeffding1963}, for any suboptimal arm $a$, time $t$, and positive integer $k \leq t$, we have
	\begin{align}
	\mathbb{P}\Big\{u^t_a \geq \mu_{a,\epsilon} + 2\sqrt{\frac{2\log{t}}{N^t_a}}\Big| N^t_a = k\Big\} 
	&= \mathbb{P}\Big\{\hat{\mu}^t_a - \sqrt{\frac{2\log{t}}{N^t_a}} \geq \mu_{a,\epsilon}\Big| N^t_a = k\Big\} \nonumber \\
	&\leq \exp\Big\{-2k\cdot\frac{2\log{t}}{k}\Big\} = t^{-4},\nonumber
	\end{align}
	and
	\begin{align}
	\mathbb{P}\Big\{u^t_a \leq \mu_{a,\epsilon}\Big| N^t_a = k\Big\} = \mathbb{P}\Big\{\hat{\mu}^t_a + \sqrt{\frac{2\log{t}}{N^t_a}} \leq \mu_{a,\epsilon}\Big| N^t_a = k\Big\} \leq \exp\Big\{-2k\cdot\frac{2\log{t}}{k}\Big\} = t^{-4}. \nonumber
	\end{align}
	Thus, we have
	\begin{align}
	& \mathbb{P}\Big\{u^t_a \geq u^t_{a^*}\Big| N^t_a > \frac{8\log{T}}{\Delta_{a,\epsilon}^{2}}\Big\} \nonumber \\ 
	& \leq \sum_{r=\lfloor 1+ \frac{8\log{T}}{\Delta_{a,\epsilon}^{2}}\rfloor}^{t}\sum_{s=1}^t\Big[\mathbb{P}\Big\{u^t_a \geq \mu_{a^*,\epsilon}\Big|N^t_a = r\Big\} + \mathbb{P}\Big\{u^t_{a^*} \leq \mu_{a^*,\epsilon}\Big|N^t_{a^*} = s\Big\}\Big] \nonumber\\
	& \leq \sum_{r=\lfloor 1+ \frac{8\log{T}}{\Delta_{a,\epsilon}^{2}}\rfloor}^{t}\sum_{s=1}^t\Big[\mathbb{P}\Big\{u^t_a \geq \mu_{a,\epsilon} + 2\sqrt{\frac{2\log{t}}{N^t_a}}\Big|N^t_a = r\Big\} + \mathbb{P}\Big\{u^t_{a^*} \leq \mu_{a^*,\epsilon}\Big|N^t_{a^*} = s\Big\}\Big] \nonumber\\
	& \leq t\cdot t\cdot (t^{-4} + t^{-4}) = 2t^{-2}.\nonumber
	\end{align}
	Note that at any round $t > n$ and for any suboptimal arm $a$, $a$ is pulled only if $u^t_a \geq u^t_{a^*}$. Thus, setting $v_{a,t} = \frac{8\log{t}}{\Delta_{a,\epsilon}^2}$, we have 
	\begin{align}
	\mathbb{E}[N^T_a] =& \sum_{t=1}^T\Big[\mathbb{P}\{N^t_a\leq v_{a,t}\}\mathbb{P}\Big\{A^t = a \Big| N^t_a \leq v_{a,t}\Big\} + \mathbb{P}\{N^t_a > v_{a,t}\}\mathbb{P}\Big\{A^t = a \Big| N^t_a > v_{a,t}\Big\}\Big] \nonumber\\
	\leq & 1 + v_{a,T} + \sum_{t=1}^T\Big[1\cdot\mathbb{P}\Big\{A^t = a \Big| N^t_a > v_{a,t}\Big\}\Big] \nonumber\\
	\leq & 1 + \frac{8\log{T}}{\Delta_{a,\epsilon}^{2}} + \sum_{t=1}^T\mathbb{P}\Big\{u^t_a \geq u^t_{a^*}\Big| N^t_a > \frac{8\log{T}}{\Delta_{a,\epsilon}^{2}}\Big\} \nonumber \\
	\leq & \frac{8\log{T}}{\Delta_{a,\epsilon}^{2}} + 1 + 2\sum_{t=1}^Tt^{-2}\nonumber \leq \frac{8\log{T}}{\Delta_{a,\epsilon}^2} + 1 + \frac{\pi^2}{3}.\nonumber
	\end{align}
	%		 states that for any arms $a$ and $b$, time $t$, and positive integer $k$, we have
	%		\begin{align}
	%			\mathbb{P}\Big\{\hat{\mu}^t_a - \mu_a \geq \sqrt\frac{2\log{t}}{N^t_a}, N^t_a = k\Big\} \leq \exp\Big\{-2k\frac{2\log{t}}{k}\Big\} = t^{-4}. \nonumber
	%		\end{align}
	%		Since $N^t_a \leq t$, we have
	%		
	%		
	%		The private rewards of any arm $a$ obtained from CTB$(\epsilon)$ follows the Bernoulli$(\mu_{a,\epsilon})$ distribution. By viewing the private rewards of arm $a$ generated from CTB$(\epsilon)$ as generated from a Bernoulli arm with mean $\mu_{a,\epsilon}$, LDP-UCB is the same as UCB1 in \cite{FiniteAnalysis2002}. By Eq~(2) in \cite{FiniteAnalysis2002}, for any suboptimal arm $a$,
	%		\begin{align}
	%			\mathbb{E}[N_a(T)] \leq \frac{8}{\Delta_{a,\epsilon}^2}\log{T} + 1 + \frac{\pi^2}{3}. \nonumber
	%		\end{align}
	Thus, the (expected) regret of LDP-UCB is at most
	\begin{align}
	\sum_{a:\Delta_{a,\epsilon}>0}\Big[\frac{8}{\Delta_{a,\epsilon}^2}\log{T} + 1 + \frac{\pi^2}{3}\Big]\Delta_a = \sum_{a:\Delta_a>0}\Big[\frac{8}{\Delta_a}\Big(\frac{e^\epsilon+1}{e^\epsilon-1}\Big)^2\log{T} + \Big(1 + \frac{\pi^2}{3}\Big)\Delta_a\Big] \nonumber.
	\end{align}
	The proof of the distribution-dependent regret is complete.
	
	\textbf{Distribution-free regret.} Similar to the proof of the distribution-dependent regret, we have $\Delta_{a,\epsilon} = (\frac{e^\epsilon-1}{e^\epsilon+1})\Delta_a = \Omega(\epsilon\Delta_a)$ and 
	\begin{align}
	\mathbb{E}[N_a(T)] \leq \frac{8}{\Delta_{a,\epsilon}^2}\log{T} + 1 + \frac{\pi^2}{3} =  \frac{8}{\Delta_a^2}\Big(\frac{e^\epsilon+1}{e^\epsilon-1}\Big)^2\log{T} + O(1) =  O\Big(\frac{\log{T}}{\epsilon^2\Delta_a^2} + 1\Big). \nonumber
	\end{align} 
	Let $\alpha$ be a number in $(0,1)$. For all arms $a$ with $\Delta_a \leq \alpha$, the regret incurred by pulling these arms is upper bounded by $T\alpha$. For any arm $a$ with $\Delta_a > \alpha$, the expected regret incurred by pulling arm $a$ is upper bounded by $\mathbb{E}[N_a(T)\Delta_a] = O(\frac{1}{\epsilon^2\Delta_a}\log{T} + 1)$. Thus, the regret of LDP-UCB-B is at most
	\begin{align}
	T\alpha + O\Big(\frac{n}{\epsilon^2\alpha}\log{T} + n\Big).  \nonumber
	\end{align}
	By choosing $\alpha = \Theta(\sqrt{\frac{n\log{T}}{T\epsilon^2}})$ and recalling $T \geq n$, the regret is upper bounded by
	\begin{align}
	O\Big(T\sqrt{\frac{n\log{T}}{T\epsilon^2}} + \sqrt{\frac{T\epsilon^2}{n\log{T}}}\frac{n}{\epsilon^2}\log{T} + n\Big) = O\Big(\frac{1}{\epsilon}\sqrt{nT\log{T}}\Big). \nonumber
	\end{align}
	This completes the proof of the distribution-free regret, and the proof of Theorem~\ref{Thm:LDP-UCB-B} is complete.
\end{proof}

\subsection{Proof of Lemma~\ref{Lm:Sigmoid}}
\RestateSigmoid*
\begin{proof}[\textbf{Proof.}]
	Let $F$ be the cumulative probability function of $\mathcal{N}$ (here, we allow point mass on $\mathcal{N}$). Note that $\lambda, \mu \in [0,1]$, and we have
	\begin{align}
	\mathbb{E}[s(X) - s(Y)] =& \int_{-\infty}^{+\infty}\frac{1}{1+e^{-x-\mu}}-\frac{1}{1+e^{-x-\lambda}}\ \mathrm{d}F(x) \nonumber\\
	= & \int_{-\infty}^{+\infty}\frac{e^{-x}(e^{-\mu}-e^{-\lambda})}{(1+e^{-x-\lambda})(1+e^{-x-\mu})}\ \mathrm{d}F(x) \nonumber\\
	\geq & \int_{-\infty}^{+\infty}\frac{e^{-x}(e^{-\mu}-e^{-\lambda})}{(1+e^{-x})^2}\ \mathrm{d}F(x). \nonumber
	\end{align}
	Since the function $e^{-x}$ is decreasing and convex on $x$ and $(e^{-x})'|_{x=1} = -e^{-1}$, we have
	\begin{align}
	e^{-\mu} - e^{-\lambda} \geq e^{-(1+\mu-\lambda)} - e^{-1} \geq (e^{-x})'|_{x=1}\cdot (\mu-\lambda) = e^{-1}(\lambda - \mu). \nonumber
	\end{align}
	Let $Z$ be a random variable that follows $\mathcal{N}$, i.e., the CDF of $Z$ is $F$. For $a \in (0,1/4)$, solving the equation and we get
	\begin{align}
	\frac{e^{-x}}{(1+e^{-x})^2} = a \iff x = \pm \log\Big(\frac{1}{2a} - 1 + \sqrt{\frac{1}{4a^2}-\frac{1}{a}}\Big).\nonumber
	\end{align}
	Here, we let $v_a$ denote the positive part of the right-hand side. Since $\frac{e^{-x}}{(1+e^{-x})^2}$ decreases with $x$, $\frac{e^{-x}}{(1+e^{-x})^2} \geq a$ if and only if $|x| \leq v_a$. Therefore, we have
	\begin{align}
	\mathbb{P}\Big\{\frac{e^{-Z}}{(1+e^{-Z})^2}\geq a\Big\} =& \mathbb{P}\{|Z| \leq v_a\} \stackrel{(a)}{\geq} 1 - 2e^{-{v_a^2}/{2}},\nonumber 
	\end{align}
	where (a) is due to the property of sub-Gaussian distributions.
	Thus,
	\begin{align}
	\mathbb{E}\Big[\frac{e^{-Z}}{(1+e^{-Z})^2}\Big] \geq \sup_{a\in[0,1/4]}a(1 - 2e^{-v_a^2/2}) \stackrel{a = 0.107}{\geq} 0.0765. \nonumber 
	\end{align}
	This implies 
	\begin{align}
	\int_{-\infty}^{+\infty}\frac{e^{-x}}{(1+e^{-x})^2}\ \mathrm{d}F(x) \geq 0.0765. \nonumber
	\end{align}
	Along with $e^{-\mu} - e^{-\lambda} \geq e^{-1}(\lambda - \mu)$, we get
	\begin{align}
	\mathbb{E}[s(X) - s(Y)] \geq 0.0765e^{-1}(\lambda - \mu). \nonumber
	\end{align}
	Specifically, if the noises are Gaussian with mean $0$ and variance $1$, then by numerically computing the integration, we have 
	\begin{align}
	\mathbb{E}[s(X) - s(Y)] \geq 0.2066e^{-1}(\lambda - \mu). \nonumber
	\end{align}
	The proof of Lemma~\ref{Lm:Sigmoid} is complete.
\end{proof}

\subsection{Proof of Lemma~\ref{Lm:CTL-S}}
\RestateCTLS*
\begin{proof}[\textbf{Proof.}]
	The $\epsilon$-DP of CTL-S follows from the $\epsilon$-DP of CTL stated in Lemma~\ref{Lm:CTL}. The difference between the expected responses of CTL-S$(\epsilon)$ on arms $a$ and $b$ follows from Lemma~\ref{Lm:Sigmoid}. This completes the proof of Lemma~\ref{Lm:CTL-S}.
\end{proof}

\subsection{Proof of Lemma~\ref{Lm:CTB-S}}
\RestateCTBS*
\begin{proof}[\textbf{Proof.}]
	The $\epsilon$-DP of CTB-S follows from the $\epsilon$-DP of CTB stated in Lemma~\ref{Lm:CTB}.
	
	Let arms $a$ and $b$ with $\mu_a \geq \mu_b$ be given. Let $R_a$ be the reward of some pull of arm $a$ and $R_b$ be that of arm $b$. Let $S_a = s(R_a)$ and $S_b = s(R_b)$, where $s(\cdot)$ is the Sigmoid function. Let $X_a$ be the private response of CTB-S$(\epsilon)$ on arm $a$ and $X_b$ be that on arm $b$, i.e., $X_a = M_B(S_a)$ and $X_b = M_B(S_b)$. By Lemma~\ref{Lm:Sigmoid}, we have $\mathbb{E}[S_a-S_b] \geq c_s(\mu_a - \mu_b)$. Also, since $S_a$ and $S_b$ are bounded in $[0,1]$, by the property of $M_B(\cdot)$ stated in Lemma~\ref{Lm:CTB}, we have $\mathbb{E}[X_a - X_b] \geq c_s(\frac{e^\epsilon-1}{e^\epsilon+1})(\mu_a - \mu_b) = c_s(\mu_{a,\epsilon} - \mu_{b,\epsilon})$. This completes the proof of Lemma~\ref{Lm:CTB-S}.
\end{proof}

\subsection{Proof of Lemma~\ref{Lm:Dkl}}
\RestateKL*
\begin{proof}[\textbf{Proof.}]
	To prove this lemma, we introduce a fact about $D_{\mbox{\tiny{KL}}}$. 
	\begin{fact}[Lemma~1 in \cite{KLInequality2001}]\label{Fact:KLBound}
		For two PDFs $f_1$ and $g_1$ with the same support $\Omega$, if for all $x$ in $\Omega$ we have $0 < r \leq f_1(x) / g_1(x) \leq R < \infty$, then the following holds
		\begin{align}
		0 \leq D_{\mbox{\tiny{KL}}}(f_1||g_1) \leq \frac{(R-r)^2}{4rR}, \nonumber 
		\end{align}
		where the two equalities hold if and only if $f_1$ and $g_1$ are equal almost surely.
	\end{fact}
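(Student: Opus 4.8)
The plan is to reduce the statement to a one-dimensional inequality about the likelihood ratio and then chain three elementary bounds. First I would set $\rho(x) := f_1(x)/g_1(x)$, so that the hypothesis reads $r \le \rho(x) \le R$ on $\Omega$, and observe that under the base density $g_1$ the ratio has mean one, since $\int_\Omega \rho\, g_1 = \int_\Omega f_1 = 1$. Writing the divergence as $D_{\mbox{\tiny{KL}}}(f_1\|g_1) = \int_\Omega (\rho\log\rho)\, g_1 = \mathbb{E}_{g_1}[\rho\log\rho]$ turns the problem into bounding $\mathbb{E}_{g_1}[\rho\log\rho]$ subject to $\mathbb{E}_{g_1}[\rho]=1$ and $\rho \in [r,R]$. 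The nonnegativity $D_{\mbox{\tiny{KL}}} \ge 0$ is just Gibbs' inequality, so the real content is the upper bound.

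Next I would pass through the variance (chi-square) quantity as an intermediate. Using the elementary inequality $\log t \le t-1$ with $t=\rho$ and multiplying by $\rho \ge 0$ gives $\rho\log\rho \le \rho(\rho-1)$, hence
\begin{align}
D_{\mbox{\tiny{KL}}}(f_1\|g_1) = \mathbb{E}_{g_1}[\rho\log\rho] \le \mathbb{E}_{g_1}[\rho^2] - \mathbb{E}_{g_1}[\rho] = \mathbb{E}_{g_1}[\rho^2] - 1 = \mathrm{Var}_{g_1}(\rho). \nonumber
\end{align}
It then remains to bound the variance of a mean-one variable supported on $[r,R]$. For this I would use the pointwise fact $(\rho-r)(R-\rho)\ge 0$, which rearranges to $\rho^2 \le (R+r)\rho - rR$; taking $\mathbb{E}_{g_1}$ and using $\mathbb{E}_{g_1}[\rho]=1$ yields $\mathbb{E}_{g_1}[\rho^2] \le (R+r) - rR$, and therefore $\mathrm{Var}_{g_1}(\rho) \le (R+r)-rR-1 = (R-1)(1-r)$.

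Finally I would close the gap with the purely algebraic inequality $(R-1)(1-r) \le \frac{(R-r)^2}{4rR}$, which holds because $r \le 1 \le R$. The clean way to see it is to clear denominators and verify the identity
\begin{align}
(R-r)^2 - 4rR(R-1)(1-r) = \big[(R-1) - (1-r) - 2(R-1)(1-r)\big]^2 \ge 0, \nonumber
\end{align}
i.e. the difference is a perfect square; writing $A = R-1 \ge 0$ and $B = 1-r \ge 0$ reduces the claim to $(A-B-2AB)^2 \ge 0$ after using $rR = (1-B)(1+A) = 1+A-B-AB$. Chaining the three steps gives $D_{\mbox{\tiny{KL}}}(f_1\|g_1) \le \frac{(R-r)^2}{4rR}$.

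The main obstacle is spotting that perfect-square identity in the last step; the other two steps are routine. For the equality clause I would track each inequality and note that the only binding one is $\log t \le t-1$, which is tight precisely when $t=1$, forcing $\rho \equiv 1$ almost everywhere, i.e. $f_1 = g_1$ a.e. (which in turn forces $r=R=1$ and makes the right-hand side vanish), matching the stated condition. A secondary care point is the measure-theoretic bookkeeping around point masses: $\Omega$ may carry atoms and $f_1/g_1$ must be read as the ratio of masses there, so I would phrase the two expectation steps as integration against $g_1$, under which $\mathbb{E}_{g_1}[\rho]=1$ and the pointwise bounds both survive without fuss.
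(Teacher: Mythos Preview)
The paper does not actually prove this statement; it is quoted verbatim as Lemma~1 of \cite{KLInequality2001} and used as a black box inside the proof of Lemma~\ref{Lm:Dkl}. So there is no in-paper argument to compare against, and your proposal stands as an independent, self-contained proof.

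Your argument is correct. The chain $D_{\mbox{\tiny{KL}}}(f_1\|g_1)\le \chi^2(f_1\|g_1)=\mathrm{Var}_{g_1}(\rho)\le (R-1)(1-r)\le \tfrac{(R-r)^2}{4rR}$ is valid: the first step is the standard $\log t\le t-1$ bound; the second is the Bhatia--Davis/Popoviciu-type variance bound for a mean-one variable in $[r,R]$; the third follows from your perfect-square identity, which I verified with $A=R-1$, $B=1-r$ giving $(A+B)^2-4(1+A-B-AB)AB=(A-B-2AB)^2$. Your handling of the equality case is also right: equality in the first step forces $\rho\equiv 1$ $g_1$-a.e., hence $f_1=g_1$ a.e., which collapses both sides of the double inequality to zero.

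Relative to simply citing the external lemma, your route has the advantage of being fully elementary and of exposing the intermediate $\chi^2$ bound $D_{\mbox{\tiny{KL}}}\le (R-1)(1-r)$, which is in fact sharper than $\tfrac{(R-r)^2}{4rR}$ and would already suffice for the paper's application in Lemma~\ref{Lm:Dkl}. The only cosmetic point is that you implicitly use $r\le 1\le R$, which you should state follows from $\mathbb{E}_{g_1}[\rho]=1$ together with $r\le\rho\le R$.
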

	
	Define $r(x) = f_1(x) / g_1(x)$, and we have $e^{-\epsilon} \leq r(x) \leq e^\epsilon$. For any $x$ in $\Omega$, we have
	\begin{align}
	\frac{h_a(x)}{h_b(x)} = \frac{pf(x) + (1-p)g(x)}{qf(x) + (1-q)g(x)} = \frac{pr(x) + 1-p}{qr(x) + 1-q}.\nonumber
	\end{align}
	Define 
	\begin{align}
	R := \frac{pe^\epsilon + 1-p}{qe^\epsilon + 1-q}\mbox{ and } r := \frac{pe^{-\epsilon} + 1-p}{qe^{-\epsilon} + 1-q}.\nonumber
	\end{align}
	If $p \geq q$, then $h_a(x) / h_b(x)$ is non-decreasing with $r(x)$, which implies
	\begin{align}
	r \leq h_a(x) / h_b(x) \leq R.\nonumber
	\end{align}
	If $p < q$, then $h_a(x) / h_b(x)$ is non-increasing with $r(x)$, which implies
	\begin{align}
	R \leq h_a(x) / h_b(x) \leq r.\nonumber
	\end{align}
	In both cases, by Fact~\ref{Fact:KLBound}, we have $D_{\mbox{\tiny{KL}}}(h_a||h_b) \leq \frac{(R-r)^2}{4rR}$, where
	\begin{align}
	R - r = & \frac{pe^{\epsilon} + 1-p}{qe^{\epsilon} + 1-q} - \frac{pe^{-\epsilon} + 1-p}{qe^{-\epsilon} + 1-q} \nonumber \\
	= & \frac{(pe^{\epsilon} + 1-p)(qe^{-\epsilon} + 1-q) - (qe^{\epsilon} + 1-q)(pe^{-\epsilon} + 1-p)}{(qe^{\epsilon} + 1-q)(qe^{-\epsilon} + 1-q)} \nonumber \\
	= & \frac{(p-q)(e^\epsilon - e^{-\epsilon})}{(qe^{\epsilon} + 1-q)(qe^{-\epsilon} + 1-q)},\nonumber
	\end{align}
	and
	\begin{align}
	rR = & \frac{pe^\epsilon + 1-p}{qe^\epsilon + 1-q} \cdot \frac{pe^{-\epsilon} + 1-p}{qe^{-\epsilon} + 1-q} \nonumber \\
	= & \frac{p^2 + p(1-p)(e^\epsilon + e^{-\epsilon}) + (1-p)^2}{(qe^{\epsilon} + 1-q)(qe^{-\epsilon} + 1-q)}. \nonumber
	\end{align}
	Therefore, we have
	\begin{align}
	D_{\mbox{\tiny{KL}}}(h_a||h_b) \leq & \frac{1}{4}\cdot \Big(\frac{(p-q)(e^\epsilon - e^{-\epsilon})}{(qe^{\epsilon} + 1-q)(qe^{-\epsilon} + 1-q)}\Big)^2 \cdot \frac{(qe^{\epsilon} + 1-q)(qe^{-\epsilon} + 1-q)}{p^2 + p(1-p)(e^\epsilon + e^{-\epsilon}) + (1-p)^2} \nonumber \\
	= & \frac{1}{4}\cdot\frac{(p-q)^2(e^\epsilon - e^{-\epsilon})^2}{(qe^{\epsilon} + 1-q)(qe^{-\epsilon} + 1-q)}\cdot\frac{1}{p^2 + p(1-p)(e^\epsilon + e^{-\epsilon}) + (1-p)^2}\nonumber \\
	= & \frac{1}{4}\cdot\frac{(p-q)^2(e^\epsilon - e^{-\epsilon})^2}{q^2 + q(1-q)(e^\epsilon+e^{-\epsilon})+(1-q)^2}\cdot\frac{1}{p^2 + p(1-p)(e^\epsilon + e^{-\epsilon}) + (1-p)^2}\nonumber \\
	\stackrel{(a)}{\leq} & \frac{1}{4}\cdot\frac{(p-q)^2(e^\epsilon - e^{-\epsilon})^2}{1/2 + q(1-q)(e^\epsilon+e^{-\epsilon})}\cdot\frac{1}{1/2 + p(1-p)(e^\epsilon + e^{-\epsilon})} \nonumber\\
	\leq & \frac{1}{4} \cdot \frac{(p-q)^2(e^\epsilon - e^{-\epsilon})^2}{1/2} \cdot \frac{1}{1/2} \nonumber \\
	= & (p-q)^2(e^\epsilon - e^{-\epsilon})^2, \nonumber
	\end{align}
	where (a) is due to $x^2 + (1-x)^2 \geq 1/2$ for any $x$. This completes the proof of Lemma~\ref{Lm:Dkl}.
\end{proof}

	\begin{figure}[h]
	\begin{subfigure}[b]{0.32\textwidth}
		\includegraphics[scale=0.25]{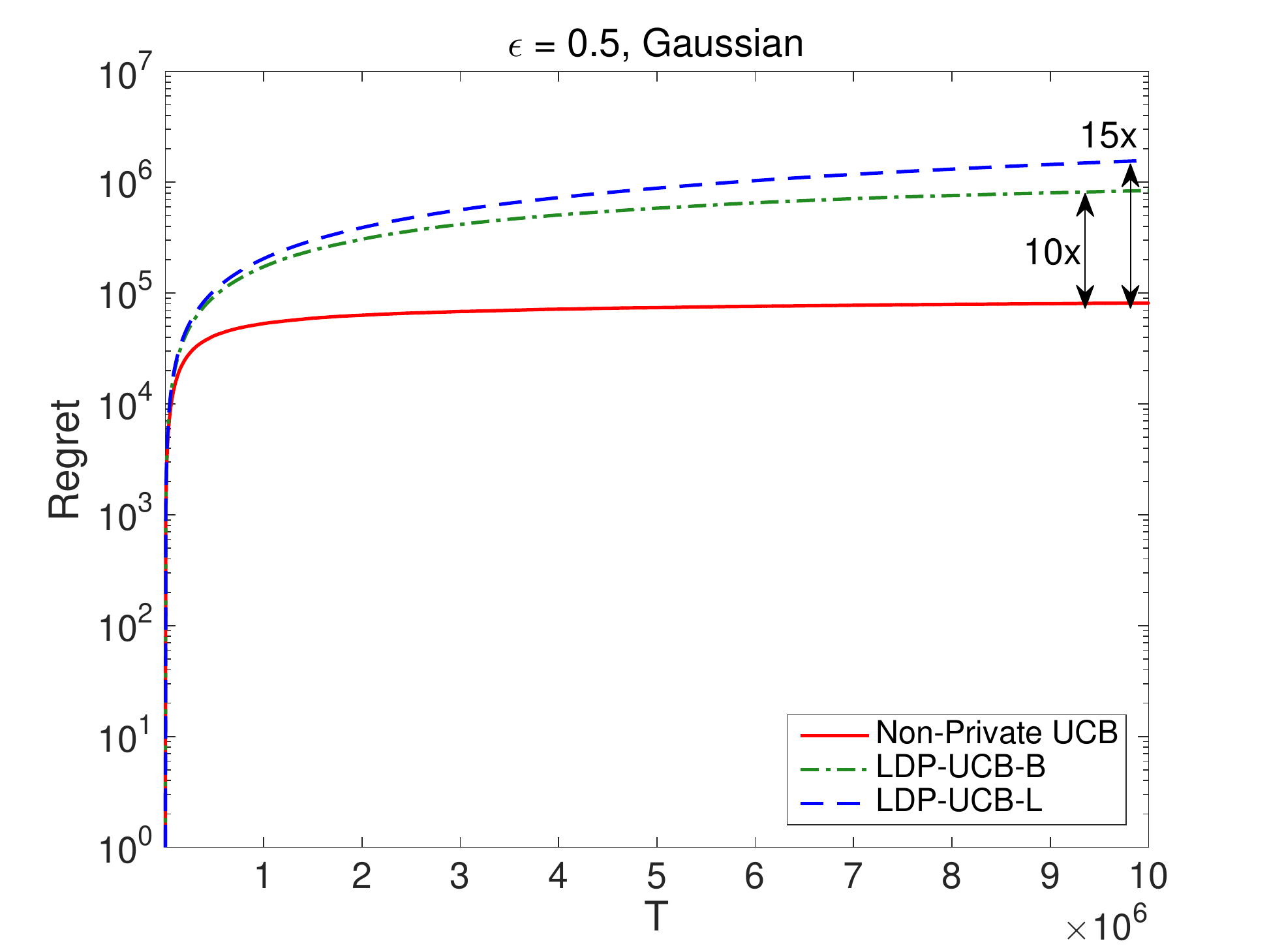}
		\caption{$\epsilon = 0.5$.}
	\end{subfigure}\ \ 
	\begin{subfigure}[b]{0.32\textwidth}
		\includegraphics[scale=0.25]{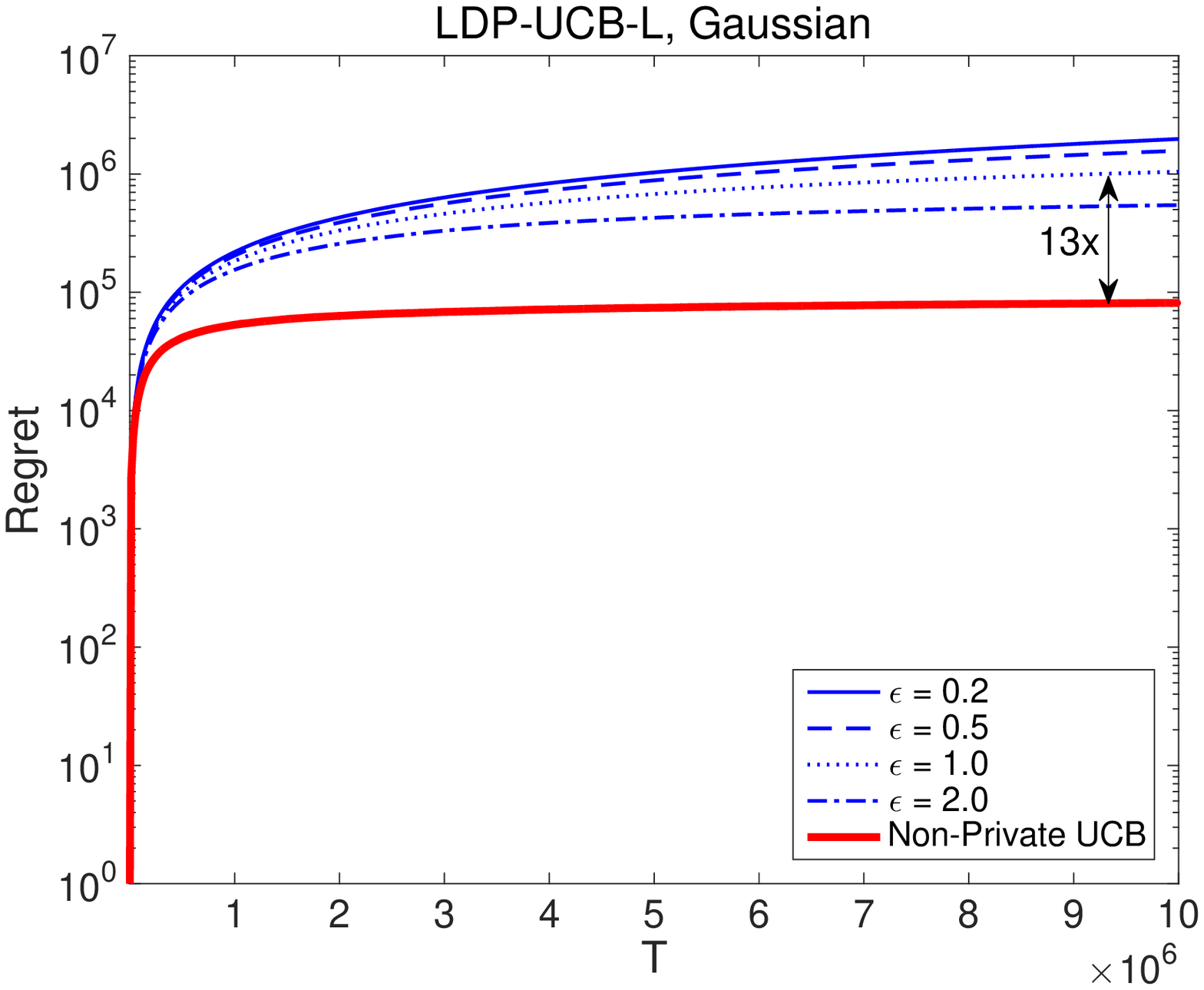}
		\caption{LDP-UCB-LS, vary $\epsilon$.}
	\end{subfigure}\ \ 
	\begin{subfigure}[b]{0.32\textwidth}
		\includegraphics[scale=0.25]{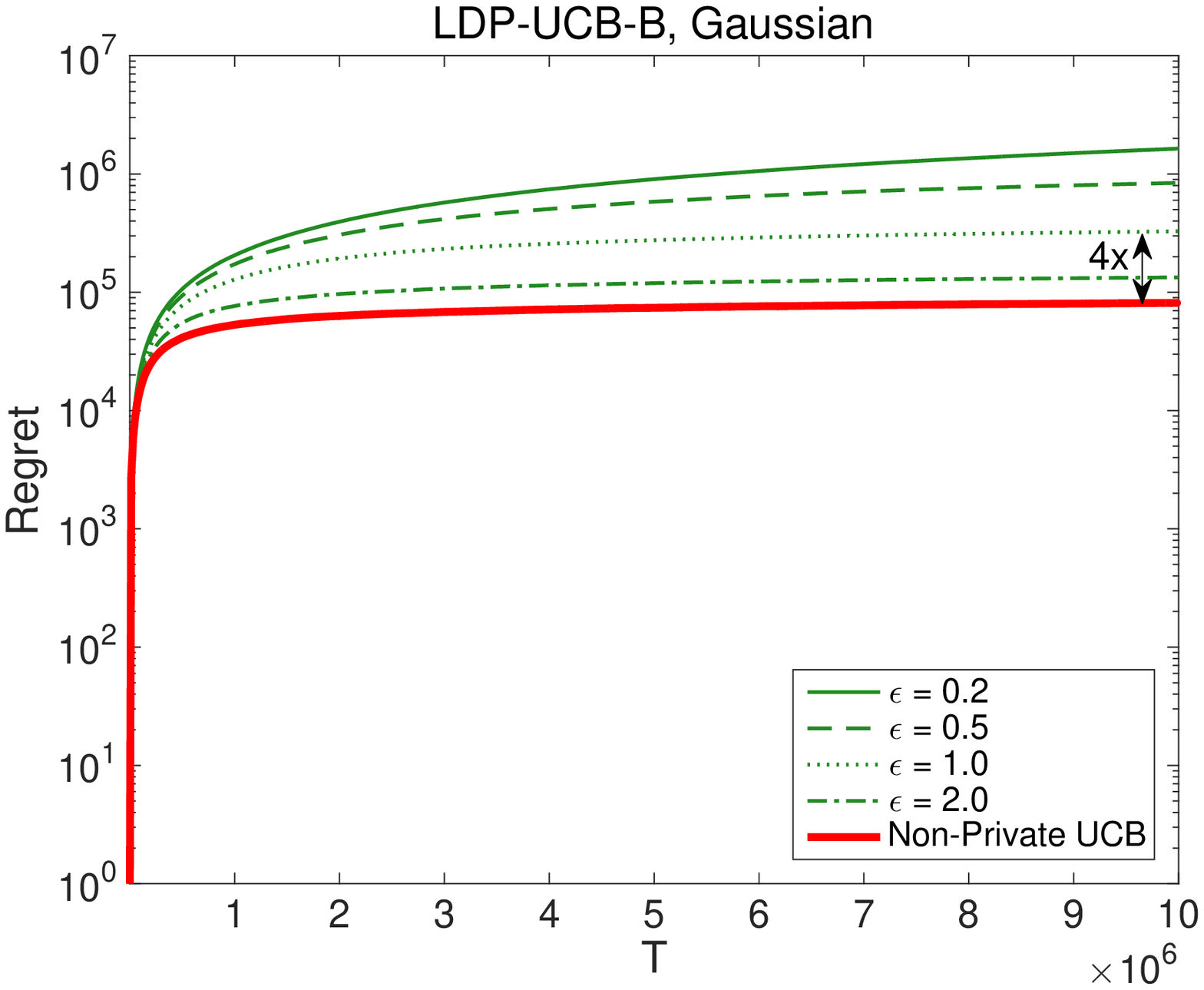}
		\caption{LDP-UCB-BS, vary $\epsilon$.}
	\end{subfigure}
	\caption{Numerical results for LDP-UCB-LS and LDP-UCB-BS. Gaussian arm for all subfigures.}\label{fig:unbounded}
\end{figure}

\section{Additional numerical results}

	In this section, we present the empirical performance of LDP-UCB-LS and LDP-UCB-BS, and the results are illustrated in Figure~\ref{fig:unbounded}. In the experiments, there are $n = 20$ arms. The best arm has mean reward $0.9$; five arms have mean rewards $0.8$; five arms have mean rewards $0.7$; five arms have mean rewards $0.6$; and four arms have mean rewards $0.5$. The rewards of all arms follow the Gaussian distributions with variance one. We also include the non-private UCB algorithm UCB1 in \cite{FiniteAnalysis2002} as a baseline. For fair comparisons, we also add the Sigmoid preprocessing on the rewards when running the non-private UCB algorithm.

	In Figure~\ref{fig:unbounded} (a), we set $\epsilon = 0.5$ and compare different algorithms. In (b), we vary the values of $\epsilon$ to compare the performance of LDP-UCB-LS under different values of $\epsilon$. In Figure~\ref{fig:unbounded} (b), we vary the value of $\epsilon$ to evaluate the performance of LDP-UCB-LS under different values of  $\epsilon$. In Figure~\ref{fig:unbounded} (c), we vary the value of $\epsilon$ to evaluate the performance of LDP-UCB-BS. From the results, we can see that the ratios of the regrets of LDP-UCB-LS and LDP-UCB-BS to that of the non-private UCB is similar to that of the bounded arms. In theory, when $\epsilon = 0.5$, the upper bounds of the ratios are $(1+\frac{4}{\epsilon})^2 = 81$ for LDP-UCB-LS and $(\frac{e^\epsilon+1}{e^\epsilon - 1})^2 = 17$ for LDP-UCB-BS, larger than the empirical results $15$ and $10$ shown in Figure~\ref{fig:unbounded} (a), which confirms our theoretical results.

\end{appendices}

\end{document}